\def\BibTeX{{\rm B\kern-.05em{\sc i\kern-.025em b}\kern-.08em
    T\kern-.1667em\lower.7ex\hbox{E}\kern-.125emX}}
\newtheorem{theorem}{Theorem}
\newtheorem{corollary}[theorem]{Corollary}
\newtheorem{lemma}[theorem]{Lemma}
\newtheorem{definition}{Definition}
\newtheorem{example}{Example}
\newtheorem{proposition}[theorem]{Proposition}
\newtheorem{assumption}{Assumption}
\begin{document}
\title{Robust Orthogonal Machine Learning\\ of Treatment Effects}


\author{Yiyan Huang, Cheuk Hang Leung, Qi Wu, and Xing Yan
\IEEEcompsocitemizethanks{\IEEEcompsocthanksitem Yiyan Huang, Cheuk Hang Leung, Qi Wu are with the School of Data Science, City University of Hong Kong, 
Kowloon, Hong Kong SAR (e-mail: yiyhuang3-c@my.cityu.edu.hk, \{chleung87,qiwu55\}@cityu.edu.hk).
\IEEEcompsocthanksitem Xing Yan is with the Institute of Statistics and Big Data, Renmin University of China, Beijing, China, 100872 (e-mail: xingyan@ruc.edu.cn).
}
}

\markboth{Robust Orthogonal Machine Learning of Treatment Effects}%
{Shell \MakeLowercase{\textit{et al.}}: Bare Demo of IEEEtran.cls for Computer Society Journals}

\IEEEtitleabstractindextext{%
\begin{abstract}
Causal learning is the key to obtaining stable predictions and answering \textit{what if} problems in decision-makings. In causal learning, it is central to seek methods to estimate the average treatment effect (ATE) from observational data. The Double/Debiased Machine Learning (DML) is one of the prevalent methods to estimate ATE. However, the DML estimators can suffer from an \textit{error-compounding issue} and even give extreme estimates when the propensity scores are close to 0 or 1. Previous studies have overcome this issue through some empirical tricks such as propensity score trimming, yet none of the existing works solves it from a theoretical standpoint. In this paper, we propose a \textit{Robust Causal Learning (RCL)} method to offset the deficiencies of DML estimators. Theoretically, the RCL estimators i) satisfy the (higher-order) orthogonal condition and are as \textit{consistent and doubly robust} as the DML estimators, and ii) get rid of the error-compounding issue. Empirically, the comprehensive experiments show that: i) the RCL estimators give more stable estimations of the causal parameters than DML; ii) the RCL estimators outperform traditional estimators and their variants when applying different machine learning models on both simulation and benchmark datasets, and a mimic consumer credit dataset generated by WGAN.
\end{abstract}

\begin{IEEEkeywords}
Robust Causal Learning, Treatment Effect Estimation, Double Machine Learning, Error Compounding, Orthogonal RCL Scores, RCL Estimators
\end{IEEEkeywords}}

\maketitle

\IEEEdisplaynontitleabstractindextext
\IEEEpeerreviewmaketitle

\IEEEraisesectionheading{\section{Introduction}\label{sec:introduction}}
\IEEEPARstart{H}{ow} to construct counterfactuals and estimate treatment effects correctly is the key to obtaining stable predictions and answering \textit{what if} questions when one only has observational data. It is ubiquitous in many applications such as in healthcare
and business 
when decision makers do not have the luxury to conduct randomized controlled trials (RCTs) because they are either costly, or time-consuming, or simply can not be done. There are three challenges in utilizing observational data to estimate the treatment-outcome relation. The first one is the selection bias. Selection bias arises during the selection process when subjects are differentially included or excluded from treatments. The control of selection bias requires data on all factors relevant to the selection mechanism for treatment. The second one is the confounding bias. Confounding bias results from inadequate adjustment of covariates that are simultaneously predictive of treatment and outcome. The control of confounding bias requires data on all covariates relevant to the outcome and the treatment assignment mechanism.  The third one is the unknown and potentially nonlinear relationship between treatment and outcome, between treatment and covariates, and between outcome and covariates.  

In healthcare applications, the selection mechanism for including or excluding a patient for treatment in RCTs is likely constrained by medical and ethical, and the outcomes are subject to covariates that are largely overlapping with those used in the selection process. For example, the Infant Health and Development Program (IHDP) \cite{hill2011bayesian} in the United States conducted experiments to evaluate the efficacy of early intervention in reducing the developmental and health problems of low birth weight and premature infants. However, there is a wide range of variables (covariates) that can influence both infants' cognitive outcomes and whether or not they receive treatments, including pregnancy complications, child's gender, household composition, day care arrangements, source of health care, quality of the home environment, parents' race and ethnicity, and maternal age, education, IQ, and employment. They managed to obtain the study sample of infants which was stratified by birth weight (2,000 grams or less, 2,001-2,500 grams) and randomized to the Intervention Group or the Follow-Up Group. The goal is to study the treatment effect of the specialist visits (binary treatment) on the cognitive scores of children (continuous-valued outcome). 

In an example of business applications, many e-commerce platforms provide shopping credit in the form of consumer loans to shoppers who frequent their marketplaces. In this new paradigm of consumer financing, platform lenders would like to know the precise impact of adjusting credit policies (the treatment), such as credit lines and loan interests, on shoppers' spending behavior. ``It is an open question to what extent FinTech lenders are using randomization (or other techniques to recover causal effects) in order to improve the accuracy of their prediction models out-of-sample", as the authors of \cite{berg2021fintech} put it. This is because conducting RCTs by boosting shoppers' credit lines is costly and waiting for changes in spending behavior to reveal is time-consuming. However, from observational data, it is difficult to assess whether the changes in spending amount (the outcome) come from covariates such as customer's age, gender, region, indebtedness, or seasonal effects, external shocks, or are indeed the results of credit policy changes (the treatment), not to mention that, in practice, similar covariates are also used by algorithms to select customers for adjusting credit lines and by how much. 

In the recent literature, various machine learning-based methods have been widely adopted to estimate treatment effects. For example, researchers have suggested (i) classical linear models, e.g., Ordinary Least Square (OLS), and Least Absolute Shrinkage and Selection Operator (LASSO) \cite{belloni2014inference, bloniarz2016lasso}; (ii) tree models and their ensembles, e.g.,  Regression Tree (RT) \cite{athey2016recursive} and Causal Forests (CF) \cite{wager2018estimation, athey2019generalized}; (iii) neural network models, e.g., TARNet \cite{shalit2017estimating} and Dragonnet \cite{shi2019adapting}. Other advanced approaches involve representation learning \cite{louizos2017causal, assaad2021counterfactual, yao2018representation, zhang2020learning}, adversarial learning \cite{yoon2018ganite, du2021adversarial}, and Bayesian learning \cite{hill2011bayesian, taddy2016nonparametric, alaa2017bayesian, ray2019debiased}.

In general, causal machine learning methods concerning ATE estimations mainly include regression adjustment methods and re-weighting methods (see more details in \cite{10.1145/3444944}). Regression adjustment methods require an estimated feature-outcome relation (the outcome model) and directly average the predicted potential outcomes over the whole population to estimate ATE, so the associated estimator is called the direct regression (DR) estimator. The shortcoming of the DR estimator is that it overlooks the probabilistic impact of the covariates on the treatment assignment (i.e., the propensity score) and hence often results in biased estimations of ATE unless the outcome model is estimated accurately. Re-weighting methods mimic the principle of RCTs to make the re-weighted instances look like they receive alternative treatment. The Inverse Probability Weighting (IPW) is one of the prevalent re-weighting strategies. It involves the propensity scores rather than the outcome model. However, the IPW estimator is sensitive to the estimation of propensity scores, thus easily leading to high variance or even infinite estimates. Such occasion often occurs when the estimated propensity scores are close to 0 or 1. This is called the \textit{error-compounding issue}.

The Debiased Machine Learning (DML) method, which is exploited by \cite{chernozhukov2018double} based on \cite{neyman1979c}, has become more and more popular and been applied in different fields since its appearance. Examples include the use of the DML estimators when the data is of panel type in \cite{semenova2018orthogonal}, the application of DML for programme evaluation in \cite{knaus2020double},
and the study of the action-response effects under the credit risk context in \cite{huang2021causal}. Besides, researchers elaborate and investigate the DML estimators further. For example, DML has been extended for continuous treatment effects \cite{klosin2021automatic}, dynamic treatment effects \cite{lewis2021double},  and instrumental variable quantile regressions \cite{chen2021debiased}.
In addition, authors in \cite{oprescu2019orthogonal} combine the DML estimators with the generalized random forest given in \cite{athey2019generalized} to estimate the heterogeneous effects (or the conditional average treatment effects (CATEs)). 
Recently, \cite{mackey2018orthogonal} firstly proposed orthogonal machine learning where the treatment effect estimator satisfying the higher-order orthogonality than in DML was constructed and nice theoretical properties were presented (our paper will follow the line of orthogonal machine learning works).

DML offsets the shortcomings of classical causal learning methods. The DML method combines the two classical approaches to ensure that the corresponding estimator is accurate as long as either the outcome model or the propensity score model, but not necessarily both, is correctly specified (see \cite{robins2017minimax,robins2008higher,mukherjee2017semiparametric,kang2007demystifying,van2014higher} and the references therein). This notable merit is well known as the \textit{doubly robust} property. However, the DML estimator can still suffer from the error compounding issue since the inverse term of the propensity score is still present. For example, assuming the true propensity score $\pi^i(\mathbf{z}_m)=0.01$ and the estimated one $\hat{\pi}^i(\mathbf{z}_m)=0.05$, they only differ with $|0.01-0.05|=0.04$. However, the error of their inverses can attain up to $|1/0.01-1/0.05|=80$. Inevitably, such situation occurs commonly in practice, especially when the distribution of the treated group is substantially different from that of the controlled group (see, for example, \cite{li2018balancing, busso2014new, knaus2020double, austin2015moving, dudik2011doubly}). In practice, two variants of the DML estimator, Augmented Inverse Propensity Weighted (AIPW) \cite{robins1994estimation, rotnitzky1998semiparametric, bang2005doubly} and DML-trim \cite{chernozhukov2018double}, are often used to tackle this problem.

AIPW has the same mathematical form as the DML estimator, but can avoid an infinite estimate. When IPW is infinite, the two terms in AIPW that contain IPW will cancel each other out \cite{linden2016estimating}. DML-trim removes the individuals with very high or low estimated propensity scores \cite{crump2009dealing} or re-adjusts the propensity scores based on a specific range (e.g., in experiments of this paper, following \cite{chernozhukov2018double}, we trim the estimated propensity scores with $0.01$ and $0.99$). Nevertheless, such incomplete solutions are trick-based, not from a theoretical or methodological perspective.
Although we can deal with the error-compounding issue with them from a practical perspective, stabilizing the compounded error caused by IPW from a theoretical standpoint still remains a challenge. This observation motivates us to go beyond the DML estimator and construct estimators that are more robust to the extremes of the propensity scores.


In this paper, we propose a Robust Causal Learning (RCL) method to establish the RCL estimators of ATE. The contributions are summarized as follows: 
\begin{enumerate}
	\item [1.] Our RCL estimators robustly ease the error-compounding issue exhibited by the DML estimator since the propensity scores in the RCL estimators are no longer in an inverse form.
	\item [2.] The RCL estimators inherit the consistency and doubly robust property of the DML estimator, and satisfy the higher-order orthogonality.
	\item [3.] The RCL methodology to construct an ATE estimator can also be applied to establish other prevalent treatment effect estimators.
	\item [4.] Extensive experiments show that the proposed RCL method achieves superior performance than DR, IPW, DML, and their variants AIPW and DML-trim across various combinations of machine learning regressors and classifiers.
\end{enumerate}

The rest of the paper is organized as follows. Section \ref{sec:background} introduces the problem setup and the background of orthogonal scores. Section \ref{sec:$k^{th}$-order condition and assumptions} presents the main theoretical results, including the RCL score in Theorem \ref{thm:$k^{th}$-order orthogonal condition score function} and the RCL estimator in Corollary \ref{corollary:$k^{th}$-order orthogonal condition estimator}. In Section \ref{sec:theory}, we theoretically prove the consistency of the RCL estimator. Section \ref{sec:experiment} reports extensive experimental results on simulation datasets, benchmark datasets, and a mimic consumer credit dataset generated by WGAN. We defer some proof details 
in the appendix.

\section{Problem Formulation and Assumptions}\label{sec:background}
\subsection{The Problem Setup}
In this paper, we consider the potential outcome framework \cite{rubin1974estimating, rubin2005causal} to study the ATE estimation. Let {$\mathbf{Z}$} be the covariates (confounders), and {$D$} be the treatment variable which can take values from {$\{d^{1},\dots, d^{n}\}$}. We denote {$Y$} as the outcome variable (the response), and {$Y^{i}$} represents the \textit{potential outcome} under the treatment {$d^{i}$}. If the observed treatment is {$d^{i}$}, then the \textit{factual outcome} {$Y^{F}$} equals {$Y^{i}$}. We denote {$\{w_m=(y_m, d_m, \mathbf{z}_m)\}_{m=1}^{N}$} as the observed $N$ realizations of i.i.d. random variables {$\{W_m=(Y_m, D_m, \mathbf{Z}_m)\}_{m=1}^{N}$}. Given the true causal parameter {$\theta^{i}:=\mathbb{E}[Y^{i}]$}, the target quantity ATE between treatment $d^{i}$ and treatment $d^{j}$ is defined as 
\begin{equation}
{
\begin{aligned}
\theta^{i,j}=\theta^{i}-\theta^{j}.
\end{aligned}
}
\end{equation}
Identifying ATE {$\theta^{i,j}$} under the potential outcome framework requires some fundamental assumptions to ensure that {$\theta^{i}$} and {$\theta^{j}$} are identifiable. Thus, we impose the following assumptions as stated in existing causal inference literature.

\begin{assumption}[Stable Unit Treatment Value Assumption] (SUTVA)
	The potential outcomes for any individual do not vary regardless of the treatment status of other individuals.
\end{assumption}

\begin{assumption}[Ignorability]
	Given the covariates {$\mathbf{Z}$}, the potential outcomes are independent of the treatment assignment {$D$}, i.e., {$(Y^{1},\cdots, Y^{n})\perp\!\!\!\perp D\mid \mathbf{Z}$}.
\end{assumption}

\begin{assumption}[Positivity]
	Treatment assignment is not deterministic regardless of the values the covariates {$\mathbf{Z}$} take, i.e., {$0<\mathbb{P}\{D=d^{i}\mid \mathbf{Z}=\mathbf{z}\}<1$}, {$\forall\;i$} and {$\forall\;\mathbf{z}$}.
\end{assumption}

\begin{assumption}[Consistency]
If an individual receives treatment {$d^{i}$}, his factual outcome {$Y^{F}$} is equal to the potential outcome under treatment {$d^{i}$}, i.e., {$Y^{F}=Y^{i}$} if {$D=d^{i}$}.
\end{assumption}

These assumptions guarantee that ATE can be inferred if we specify the relation {$\mathbb{E}\left[Y \mid D, \mathbf{Z}\right]$}, which is equivalent to estimating {$g^{i}(\mathbf{Z})$} for each {$i \in \{1, \dots, n\}$} in the generalized propensity score model setting\footnote{This model setting allows {$D$} to be multi-valued. It can be reduced to the \textit{iteractive model} in \cite{chernozhukov2018double} once the treatment {$D$} takes binary values.} \cite{imbens2000role,tu2013using}:
\begin{subequations}
	\begin{align}		
		Y^{i}&=g^{i}(\mathbf{Z})+\xi^{i},\quad \mathbb{E}\left[\xi^{i}\mid D, \mathbf{Z}\right]=0 \quad a.s.,	\\	
	\mathbf{1}_{\{D=d^{i}\}}&=\pi^{i}(\mathbf{Z})+\nu^{i},\quad \mathbb{E}\left[\nu^{i}\mid\mathbf{Z}\right]=0 \quad a.s..
\end{align}	
\end{subequations}\noindent
Here, {$g^i(\cdot)$} and {$\pi^i(\cdot)$} are true \textit{nuisance parameters}. {$\xi^{i}$} and {$\nu^{i}$} are the noise terms. {$\pi^i(\mathbf{Z})=\mathbb{E}\left[\mathbf{1}_{\{D=d^{i}\}} \mid \mathbf{Z}\right]$} is known as the generalized propensity score (GPS) with multi-valued treatment variable. 
Notice that $\xi^{i} \perp\!\!\!\perp D\mid \mathbf{Z}$, so $\mathbb{E}\left[\xi^{i}\mid \mathbf{Z}\right]=0.$
Finally, the true causal parameter {$\theta^i$} for {$i \in \{1, \dots, n\}$} can be computed by {$\theta^i:=\mathbb{E}\left[Y^{i}\right]=\mathbb{E}\left[g^i(\mathbf{Z})\right]$}  and the true ATE can be computed by {$\theta^{i,j}=\theta^{i}-\theta^{j}$}.

\subsection{Non-Orthogonal Scores and Orthogonal Scores}

We aim to estimate the true causal parameters $\theta^{i}$ given $N$ i.i.d. samples $\{W_m=(Y_m, D_m, \mathbf{Z}_m)\}^{N}_{m=1}$. According to \cite{chernozhukov2018double}, the standard procedure to obtain the estimated causal parameter $\hat{\theta}^i$ is: 1) getting the estimated nuisance parameters $\hat{\rho}$, e.g., $\hat{\rho}=(\hat{g}^i, \hat{\pi}^i)$; 2) constructing a \textit{score} that satisfies the \textit{moment condition} (Definition \ref{def:moment condition}); 3) establishing the estimator of $\theta^i$, which is solved from the moment condition \eqref{eqt:moment condition}.
\begin{definition}[\textit{\textbf{Moment Condition}}]\label{def:moment condition}
	Let {$W=(Y,D,\mathbf{Z})$} and {$\theta$} be the true causal parameter with {$\vartheta$} being a causal parameter that lies in the causal parameter set. Denoting the nuisance parameters as {$\varrho$} and the true nuisance parameters as {$\rho$}, we say a score {$\psi(W,\vartheta,\varrho)$} satisfies the \textit{moment condition} if
	\begin{equation}
		\begin{aligned}\label{eqt:moment condition}
			\mathbb{E}\left[\psi(W,\vartheta,\varrho)|_{\vartheta=\theta,\;\varrho=\rho}\right]=0.
		\end{aligned}
	\end{equation}
\end{definition}
The moment condition guarantees that the estimator derived from the score is unbiased if the nuisance parameters equal the true ones. Here, we give the scores which satisfy the moment condition of two classical causal learning methods (DR and IPW) introduced before.

\begin{example}[\textit{The Score and Estimator for DR}]\label{example:DR}
Let {$\varrho=\mathcal{g}^{i}$} and $\rho=g^{i}$. In the DR method, the score {$\psi_{DR}^{i}(W,\vartheta,\varrho)$} satisfying the moment condition and the associated estimator {$\hat{\theta}_{DR}^{i}$} are
{	\begin{equation*}
		\begin{aligned}
			\psi_{DR}^{i}(W,\vartheta,\varrho)=\vartheta-\mathcal{g}^{i}(\mathbf{Z}); \quad 
			\hat{\theta}_{DR}^{i}=\frac{1}{N}\underset{m=1}{\overset{N}{\sum}}\hat{g}^{i}(\mathbf{Z}_{m}).
		\end{aligned}
	\end{equation*}}
\end{example}
\begin{example}[\textit{The Score and Estimator for IPW}]\label{example:IPW}	
Let {$\varrho=a_{i}$} and {$\rho=\pi^{i}$}. In the IPW method, the score {$\psi_{IPW}^{i}(W,\vartheta,\varrho)$} satisfying the moment condition and the associated estimator {$\hat{\theta}_{IPW}^{i}$} are
		{	\begin{equation*}
				\begin{aligned}
					\psi_{IPW}^{i}(W,\vartheta,\varrho)=\vartheta-\frac{Y\mathbf{1}_{\{D=d^{i}\}}}{a_{i}(\mathbf{Z})}; \\
					\hat{\theta}_{IPW}^{i}=\frac{1}{N}\underset{m=1}{\overset{N}{\sum}}\frac{Y_{m}\mathbf{1}_{\{D_{m}=d^{i}\}}}{\hat{\pi}^{i}(\mathbf{Z}_{m})}.
				\end{aligned}
		\end{equation*}}
\end{example}
Generally, the estimators established from the scores in Example \ref{example:DR} and \ref{example:IPW} might be invalid unless {$\hat{g}^{i}$} and {$\hat{\pi}^{i}$} estimate {$g^{i}$} and {$\pi^{i}$} well. To obtain robust estimators, \cite{chernozhukov2018double,mackey2018orthogonal} suggest that we should construct scores which satisfy the (higher-order) Orthogonal Condition (Definition \ref{def:orthogonal condition}) apart from the moment condition.

\begin{definition}[\textit{\textbf{Orthogonal Condition}}]\label{def:orthogonal condition}
	Suppose that the nuisance parameters and the true nuisance parameters are {$\gamma$}-dimensional tuples, i.e., {$\varrho=(\mathcal{h}_{1},\cdots, \mathcal{h}_{\gamma})$} and {$\rho=(h_{1},\cdots, h_{\gamma})$}. The Gateaux $\alpha$-differential of a score function $\psi(W,\vartheta,\varrho)$ w.r.t. the nuisance parameters, denoted as $\mathrm{\mathbf{D}}^{\alpha}\psi(W,\vartheta,\varrho)$, is defined as
\begin{equation}
{
\begin{aligned}\label{eqt:higher-order differential}
\mathrm{\mathbf{D}}^{\alpha}\psi(W,\vartheta,\varrho)&=\partial^{\alpha_{1}}_{\mathcal{h}_{1}}\partial^{\alpha_{2}}_{\mathcal{h}_{2}}\cdots\partial^{\alpha_{\gamma}}_{\mathcal{h}_{\gamma}}\psi(W,\vartheta,\varrho)\\
&=\partial^{\alpha_{1}}_{\mathcal{h}_{1}}\partial^{\alpha_{2}}_{\mathcal{h}_{2}}\cdots\partial^{\alpha_{\gamma}}_{\mathcal{h}_{\gamma}}\psi(W,\vartheta,\mathcal{h}_{1},\dots,\mathcal{h}_{\gamma}).
\end{aligned}
}
\end{equation}	
Given {$S\subseteq \mathbb{Z}_{\geq 0}^{\gamma}$} where $\mathbb{Z}_{\geq 0}$ is the set of all nonnegative integers, we say a score {$\psi(W,\vartheta,\varrho)$} satisfies the \textit{orthogonal condition} if
\begin{equation}
{
\begin{aligned}
\mathbb{E}\left[\mathrm{\mathbf{D}}^{\alpha}\psi(W,\vartheta,\varrho)\mid_{\vartheta=\theta,\;\varrho=\rho}\mid\mathbf{Z}\right]=0,\quad \forall \alpha\in S. \label{eqt:$k^{th}$-order orthogonal condition}
\end{aligned}
}
\end{equation}
{$S$} can be any proper subset of {$\mathbb{Z}_{\geq 0}^{\gamma}$}. Throughout the paper, for some positive integer {$k$}, we define {$S$} as
\begin{equation}
{
\begin{aligned}\label{eqt:S definition}
S_k=\{\alpha\in\mathbb{Z}_{\geq 0}^{\gamma}\mid \left\|\alpha\right\|_{1}\leq k\}.
\end{aligned}
}
\end{equation}
\end{definition}

The orthogonal condition ensures that the established estimators can still be valid even though some nuisance parameters are misspecified (see \cite{chernozhukov2018double, robins2008higher, mackey2018orthogonal} for more details). Below we demonstrate how to utilize {$\mathbf{D}^{\alpha}\psi(W,\vartheta,\varrho)$} to justify that the scores in Example \ref{example:DR} and  \ref{example:IPW} violate the orthogonal condition. Suppose $k$ in \eqref{eqt:S definition} is $1$, then
{\begin{equation*}
	\begin{aligned}
	&\mathbf{D}^{(1)}\psi_{DR}^{i}(W,\vartheta,\varrho)=\partial_{\mathcal{g}^{i}}\psi_{DR}^{i}(W,\vartheta,\varrho)=-1,\\
	&\mathbf{D}^{(1)}\psi_{IPW}^{i}(W,\vartheta,\varrho)=\partial_{a_{i}}\psi_{IPW}^{i}(W,\vartheta,\varrho)=\frac{Y\mathbf{1}_{\{D=d^{i}\}}}{a_{i}(\mathbf{Z})^{2}},\\
	&\mathbb{E}\left[\mathbf{D}^{(1)}\psi_{DR}^{i}(W,\vartheta,\varrho)\mid_{\vartheta=\theta^i,\;\varrho=g^i} \mid \mathbf{Z}\right]=-1 \neq 0,\\
	&\mathbb{E}\left[\mathbf{D}^{(1)}\psi_{IPW}^{i}(W,\vartheta,\varrho)\mid_{\vartheta=\theta^i,\;\varrho=\pi^i} \mid \mathbf{Z}\right]\\
	=&\mathbb{E}\left[\frac{Y\mathbf{1}_{\{D=d^{i}\}}}{\pi^{i}(\mathbf{Z})^{2}} \mid \mathbf{Z}\right] \neq 0.\\
	\end{aligned}
\end{equation*}}\noindent
The above calculations show that {$\psi_{DR}^{i}(W,\vartheta,\varrho)$} and {$\psi_{IPW}^{i}(W,\vartheta,\varrho)$} do not satisfy the orthogonal condition. Such scores are usually termed as the \textit{non-orthogonal scores}. As a consequence, their associated estimators are not \textit{doubly robust}. To obtain a doubly robust estimator, \cite{chernozhukov2018double} propose the DML method to construct the DML score as we summarize below.

\begin{example}[\textit{The Score and Estimator for DML}]\label{example:DML}	
Let {$\varrho=(\mathcal{g}^{i}, a_{i})$} and {$\rho=(g^{i},\pi^{i})$}, in the DML method, the score {$\psi_{DML}^{i}(W,\vartheta,\varrho)$} that satisfies both the moment condition and orthogonal condition and the associated estimator {$\hat{\theta}_{DML}^{i}$} are
	{
		\begin{equation*}
						\begin{aligned}
				&\psi_{DML}^{i}(W,\vartheta,\varrho)=\vartheta-\mathcal{g}^{i}(\mathbf{Z})-\frac{\bm{1}_{\{D=d^{i}\}}}{a_{i}(\mathbf{Z})}(Y-\mathcal{g}^{i}(\mathbf{Z})),\\
				&\hat{\theta}_{DML}^{i}=\frac{1}{N}\overset{N}{\underset{m=1}{\sum}}\hat{g}^{i}(\mathbf{Z}_{m})+\frac{1}{N}\overset{N}{\underset{m=1}{\sum}}\frac{\bm{1}_{\{D_{m}=d^{i}\}}(Y_{m}-\hat{g}^{i}(\mathbf{Z}_{m}))}{\hat{\pi}^{i}(\mathbf{Z}_{m})}.
			\end{aligned}
		\end{equation*}
}
\end{example}
We can prove that {$\psi_{DML}^{i}(W,\vartheta,\varrho)$} satisfies the orthogonal condition when {$k=1$} in Equation \eqref{eqt:S definition} (see \cite{chernozhukov2018double} for detailed derivations) following similar calculation processes for DR and IPW. {$\psi_{DML}^{i}(W,\vartheta,\varrho)$} is therefore termed as an \textit{orthogonal score}. The orthogonal condition assures that the DML estimator is doubly robust, i.e., the estimator is locally unbiased and consistent as long as either {$g^{i}$} or {$\pi^{i}$} is correctly specified. However, in spite of the doubly robust property, the DML estimator still suffers from the error-compounding issue once some encompassed inverse propensity scores are extreme. In real applications, one seldom encounters a situation that propensity scores are correctly estimated for all individuals and with no extreme values. 

This dilemma motivates us to construct scores such that: 1) the scores are orthogonal scores, i.e., they satisfy the moment condition (Definition \ref{def:moment condition}) and the orthogonal condition (Definition \ref{def:orthogonal condition}); 2) the estimators established from the scores can stabilize the estimation error due to the extremes of propensity scores. In the upcoming section, we will introduce a novel method, the Robust Causal Learning (RCL) method, to achieve this goal and overcome the difficulties encountered by DR, IPW, and DML methods.

\section{The Proposed Method}\label{sec:$k^{th}$-order condition and assumptions}
This section introduces our main contribution, the RCL method proposed by us. First, Section \ref{sec:Construction of the RCL Score} shows the constructed RCL score. Then Section \ref{sec:Establishment of the RCL estimators} presents the detailed construction of the RCL estimator with an algorithm that describes how to obtain such an estimator of {$\theta^{i}$} from observational data.


\subsection{Construction of The RCL Score}\label{sec:Construction of the RCL Score}

In this subsection, we construct an orthogonal score, the RCL score, to derive an estimator of {$\theta^{i}$} along the line of orthogonal machine learning works (e.g.,  \cite{chernozhukov2018double,mackey2018orthogonal}). The main result is stated in Theorem \ref{thm:$k^{th}$-order orthogonal condition score function}.
\begin{theorem}[\textbf{\textit{RCL Score}}]\label{thm:$k^{th}$-order orthogonal condition score function}
Suppose {$\varrho$} and {$\rho$} are {$2$}-dimensional tuples such that {$\varrho=(\mathcal{g}^{i},a_{i})$} and {$\rho=(g^{i},\pi^{i})$}. Let {$r$}, {$k$} be integers s.t. $1\leq k\leq r$, $r\ge 2$, and assume the local moments {$\mathbb{E}\left[(\nu^{i})^{r}\mid\mathbf{Z}\right]\neq 0$}. 
Under the assumptions on nuisance parameters and noise terms stated in \cite{chernozhukov2018double} and \cite{mackey2018orthogonal}, the proposed RCL score {$\psi_{RCL}^{i}(W,\vartheta,\varrho)$} that satisfies the moment condition and the orthogonal condition for {$S_k$} in Equation (\ref{eqt:S definition}) is
{
\begin{align}
\psi_{RCL}^{i}(W,\vartheta,\varrho)&=\vartheta-\mathcal{g}^{i}(\mathbf{Z})-(Y^{i}-\mathcal{g}^{i}(\mathbf{Z}))A(D,\mathbf{Z};a_{i}). \label{eqt:$k^{th}$-order orthogonal condition score function}
\end{align}
}\noindent
Given the integers {$r$} and {$k$},
\begin{subequations}
\begin{equation}
{
\begin{aligned}
&A(D,\mathbf{Z};a_{i})=\bar{b}_{r}\left[\mathbf{1}_{\{D=d^{i}\}}-a_{i}(\mathbf{Z})\right]^{r}+\\
&\;\;\mathbf{1}_{\{k\neq 1\}}\left[\underset{q=1}{\overset{k-1}{\sum}}b_{q}\left(\left[\mathbf{1}_{\{D=d^{i}\}}-a_{i}(\mathbf{Z})\right]^{q}-\mathbb{E}\left[(\nu^{i})^{q}\mid\mathbf{Z}\right]\right)\right],
\label{eqt:$k^{th}$-order orthogonal condition score function 2}
\end{aligned}
}
\end{equation}
where {$\bar{b}_{r}=\frac{1}{\mathbb{E}\left[(\nu^{i})^{r}\mid\mathbf{Z}\right]}$} and the coefficient {$b_{q}$} is computed by descending order for {$q \in \{k-1,\dots,1\}$}:
\begin{equation}
{
\begin{aligned}
b_{q}&=-\bar{b}_{r}\binom{r}{q}\mathbb{E}\left[(\nu^{i})^{r-q}\mid\mathbf{Z}\right]\\
&-\overset{k-1-q}{\underset{u=1}{\sum}}b_{q+u}\binom{q+u}{q}\mathbb{E}\left[(\nu^{i})^{u}\mid\mathbf{Z}\right].
\end{aligned}\label{eqt:kth_order orthogonal condition score function 2 coefficient}
}
\end{equation}
\end{subequations}
\end{theorem}

We might denote the consequent estimator constructed later using the above score as RCL{$_{r,k}$}. 
From \eqref{eqt:$k^{th}$-order orthogonal condition score function 2}, we can observe that $a_{i}(\cdot)$, the nuisance parameter of the propensity score, is no longer in an inverse form in the RCL score.
As a consequence, the established RCL estimator from \eqref{eqt:$k^{th}$-order orthogonal condition score function} can avoid the error-compounding issue. Simultaneously, the RCL score is an orthogonal score, so the RCL estimator is as doubly robust as the DML one. The proof of Theorem \ref{thm:$k^{th}$-order orthogonal condition score function} can be found in Appendix \ref{appendix:A}. From the proof, one can find some intuition behind the construction of such a score.

\subsection{Establishment of The RCL Estimator}\label{sec:Establishment of the RCL estimators}

In this part, we will go into detail about the establishment of the RCL estimator. To begin with, we can solve the estimator $\tilde{\theta}^{i}$ from \eqref{eqt:moment condition} using the emprical version of the moment condition for the RCL score \eqref{eqt:$k^{th}$-order orthogonal condition score function}:
\begin{subequations}
	{
		\begin{align}
			\tilde{\theta}^{i}=&\frac{1}{N}\underset{m=1}{\overset{N}{\sum}}g^{i}(\mathbf{Z}_{m})\label{tilde_theta_1} \\
			&+\frac{1}{N}\underset{m=1}{\overset{N}{\sum}}(Y^{i}_{m}-g^{i}(\mathbf{Z}_{m}))A(D_{m},\mathbf{Z}_{m};\pi^{i}). \label{tilde_theta_2}
		\end{align}
	}\noindent
\end{subequations}
Equation \eqref{tilde_theta_1} is referred to as the DR estimator when the true nuisance parameter {$g^i$} is replaced by the estimated one {$\hat{g}^i$}. Equation \eqref{tilde_theta_2} can then be divided into two parts:
\begin{subequations}
	{
		\begin{align}
\eqref{tilde_theta_2} &=\frac{1}{N}\underset{m\in\mathscr{I}}{\overset{}{\sum}}(Y^{i}_{m}-g^{i}(\mathbf{Z}_{m}))A(D_{m},\mathbf{Z}_{m};\pi^{i})\label{tilde_theta_2_1} \\
&+\frac{1}{N}\underset{m\in\mathscr{I}^{c}}{\overset{}{\sum}}(Y^{i}_{m}-g^{i}(\mathbf{Z}_{m}))A(D_{m},\mathbf{Z}_{m};\pi^{i})\label{tilde_theta_2_2},
		\end{align}
	}\noindent
\end{subequations}
where {$\mathscr{I}$} is the sample set in which the units are all treated with {$d^{i}$} while {$\mathscr{I}^c$} is the sample set in which the units are not treated with {$d^{i}$}. It is obvious that \eqref{tilde_theta_1} and \eqref{tilde_theta_2_1} can be directly calculated from observational data, whereas \eqref{tilde_theta_2_2} that contains the counterfactual outcomes is unavailable to compute in a direct manner. Instead of pursuing the unobservable counterfactuals, we realize that given {$i \in \{ 1,\dots,n \}$},
{\begin{equation*}
	\begin{aligned}
			&\mathbb{E}\left[(Y^i-g^i(\mathbf{Z}))A(D,\mathbf{Z};\pi^{i}) \mid D=d^j\right]\\
			&=\mathbb{E}\left[\mathbb{E}\left[\xi^{i} A(D,\mathbf{Z};\pi^{i}) \mid D=d^j, \mathbf{Z}\right]\mid D=d^j \right]\\
			&=\mathbb{E}\left[A(d^j,\mathbf{Z};\pi^{i})\mathbb{E}\left[\xi^{i} \mid D=d^j, \mathbf{Z}\right]\mid D=d^j \right]=0
	\end{aligned}
\end{equation*}}\noindent
holds for {$\forall j \in \{ 1,\dots,n \}$}. Thus, the sample means of {$Y^i-g^i(\mathbf{Z})$} and {$(Y^i-g^i(\mathbf{Z}))A(D,\mathbf{Z};\pi^{i})$} equal zero regardless the samples come from {$\mathscr{I}$} or {$\mathscr{I}^c$}. This observation allows us to replace the sample mean of the counterfactuals in \eqref{tilde_theta_2_2} with that of the factual ones. To be specific, we first define the set {$\mathcal{A}$} such that
\begin{equation}
{
\begin{aligned}
\mathcal{A}=\{Y^{i}_{m}-g^{i}(\mathbf{Z}_{m})\mid m\in\mathscr{I}\}. \label{eqt:random picking realization set random variable}
\end{aligned}
}
\end{equation}
Then, a replaced estimator of \eqref{tilde_theta_2_2} is obtained as follows:
\begin{enumerate}
	\item For the {$m^{\mathrm{th}}$} unit in the set {$\mathscr{I}^{c}$}, pick an element from {$\mathcal{A}$}, denoted as $\xi_{m}^{i}$, and multiply it by {$A(D_{m},\mathbf{Z}_{m};\pi^{i})$}. Repeat the process until we go through all the individuals in the set {$\mathscr{I}^{c}$};
	\item Compute $\frac{1}{N}\underset{m\in\mathscr{I}^{c}}{\overset{}{\sum}}\xi_{m}^{i}A(D_{m},\mathbf{Z}_{m};\pi^{i})$;
	\item Repeat above steps {$R$} times to reduce the randomness brought by the random picking procedure and return the substitute estimator {$\frac{1}{R}\overset{R}{\underset{u=1}{\sum}}\left[\frac{1}{N}\underset{m\in\mathscr{I}^{c}}{\overset{}{\sum}}\xi_{m,u}^{i} A(D_{m},\mathbf{Z}_{m};\pi^{i})\right]$}.
\end{enumerate}
Consequently, \eqref{tilde_theta_2_2} can be inferred indirectly from the observational data. With \eqref{tilde_theta_1} and \eqref{tilde_theta_2_1} as well, the RCL estimator of {$\theta^{i}$} is finally established in Corollary \ref{corollary:$k^{th}$-order orthogonal condition estimator}.

\begin{corollary}[\textbf{\textit{RCL Estimator}}]\label{corollary:$k^{th}$-order orthogonal condition estimator}
Let {$R\in\mathbb{Z}^{+}$}, {$(\hat{g}^{i},\hat{\pi}^{i})$} be the estimates of {$(g^{i},\pi^{i})$}, {$\hat{\mathcal{A}}$} be {$\mathcal{A}$} by replacing {$g^{i}$} with {$\hat{g}^{i}$} in \eqref{eqt:random picking realization set random variable}, and {$\hat{\xi}_{m,u}^{i}$} be the element that is randomly selected from the set {$\hat{\mathcal{A}}$} in the {$u^\mathrm{th}$} of $R$ repeated selections. The RCL estimator {$\hat{\theta}^{i}_{RCL}$} is given by
\begin{equation}
{
\begin{aligned}\label{eqt:$k^{th}$-order orthogonal condition final estimator 3-replaced}
\hat{\theta}^{i}_{RCL}=&\underbrace{\frac{1}{N}\underset{m=1}{\overset{N}{\sum}}\hat{g}^{i}(\mathbf{Z}_{m})}_{(a)}\\
&+\underbrace{\frac{1}{N}\underset{m\in\mathscr{I}}{\overset{}{\sum}}(Y^{i}_{m}-\hat{g}^{i}(\mathbf{Z}_{m}))A(D_{m},\mathbf{Z}_{m};\hat{\pi}^{i})}_{(b)}\\
&+\underbrace{\frac{1}{R}\overset{R}{\underset{u=1}{\sum}}\left[\frac{1}{N}\underset{m\in\mathscr{I}^{c}}{\overset{}{\sum}}\hat{\xi}_{m,u}^{i} A(D_{m},\mathbf{Z}_{m};\hat{\pi}^{i})\right]}_{(c)}.
\end{aligned}
}
\end{equation}
\end{corollary}

The proposed RCL estimator {$\hat{\theta}^{i}_{RCL}$} is a consistent estimator of {$\theta^{i}$} if {$(\hat{g}^{i},\hat{\pi}^{i})$} satisfy the assumptions stated in \cite{mackey2018orthogonal} and \cite{chernozhukov2018double}. The detailed consistency results of {$\hat{\theta}^{i}_{RCL}$} are in the next section and some pivotal proofs are left in the appendix due to the space limit. We also outline the procedure of estimating {$\theta^{i}$} from observational data using the proposed RCL method in Algorithm \ref{alg:$k^{th}$-order orthogonal condition estimate}. Note that if the whole dataset is split into the training set and the test set, Step 2 in Algorithm \ref{alg:$k^{th}$-order orthogonal condition estimate} will be only conducted on the training set, while Step 3-8 can be performed to obtain the estimate {$\hat{\theta}^{i}_{RCL}$} on either the training set or the test set. The running complexity of our algorithm is obviously at most {$O(NR)$}.

\begin{algorithm}[t]
\caption{Algorithm of obtaining the RCL estimator of $\theta^{i}$ using (\ref{eqt:$k^{th}$-order orthogonal condition final estimator 3-replaced}a)-(\ref{eqt:$k^{th}$-order orthogonal condition final estimator 3-replaced}c).}\label{alg:$k^{th}$-order orthogonal condition estimate}
\begin{algorithmic}[1]
\STATE {\bfseries Input:} Observational dataset {$\{(y_{m},d_{m},\mathbf{z}_{m})\}_{m=1}^{N} = \mathscr{I}\cup\mathscr{I}^{c}$}, and {$\mathscr{I}\cap\mathscr{I}^{c}=\emptyset$}.\label{alg:state 1}
\STATE Train {$g^{i}$} and {$\pi^{i}$} using the observed data with some machine learning models to obtain the estimated nuisance parameters {$\hat{g}^{i}$} and {$\hat{\pi}^{i}$}.
\STATE Relabel the observed data point {$(y,d,\mathbf{z})$} as {$(y,\tilde{d},\mathbf{z})$} such that {$\tilde{d}=1$} if {$d=d^{i}$} and {$\tilde{d}=0$} if {$d\neq d^{i}$}. Compute {$\tilde{d}-\hat{\pi}^{i}(\mathbf{z})$} for each observation and obtain a constant local moment estimate of {$\mathbb{E}\left[(\nu^i)^{q}\mid\mathbf{Z}\right]$} in \eqref{eqt:$k^{th}$-order orthogonal condition score function 2} for each {$q$} with the mean of all {$(\tilde{d}-\hat{\pi}^{i}(\mathbf{z}))^{q}$}.\label{alg:state 3}
\STATE Compute (\ref{eqt:$k^{th}$-order orthogonal condition final estimator 3-replaced}a)-(\ref{eqt:$k^{th}$-order orthogonal condition final estimator 3-replaced}b) using the observational data.\label{alg:state 4}
\STATE Compute {$y-\hat{g}^{i}(\mathbf{z})$} for each observation in {$\mathscr{I}$} and store the computed values in {$\hat{\mathcal{A}}^{rlz}$} such that {$\hat{\mathcal{A}}^{rlz}=\{y_{m}-\hat{g}^{i}(\mathbf{z}_{m})\mid m\in \mathscr{I}\}$}.\label{alg:state 5} 
\STATE For the {$m^{\mathrm{th}}$} individual in {$\mathscr{I}^c$}, compute {$A(d_{m},\mathbf{z}_{m};\hat{\pi}^{i})$}. 
\STATE Repeat a random picking procedure {$R$} times: picking an element {$\hat{\xi}^{i;rlz}_{m,u}$} randomly from {$\hat{\mathcal{A}}^{rlz}$} in the {$u^\mathrm{th}$} repeat for the {$m^{\mathrm{th}}$} individual in {$\mathscr{I}^c$}. Then compute {$\frac{1}{R}\overset{R}{\underset{u=1}{\sum}}\left[\frac{1}{N}\underset{m\in\mathscr{I}^{c}}{\overset{}{\sum}}\hat{\xi}_{m,u}^{i;rlz} A(d_{m},\mathbf{z}_{m};\hat{\pi}^{i})\right]$} as an estimate of (\ref{eqt:$k^{th}$-order orthogonal condition final estimator 3-replaced}c).\label{alg:state 6}
\STATE {\bfseries Return:} Use the values in Step \ref{alg:state 4} and \ref{alg:state 6} to get the estimate {$\hat{\theta}^{i}_{RCL}$} from the combination of (\ref{eqt:$k^{th}$-order orthogonal condition final estimator 3-replaced}a)-(\ref{eqt:$k^{th}$-order orthogonal condition final estimator 3-replaced}c).\label{alg:state 8}
\end{algorithmic}
\end{algorithm}

\section{Theoretical Analysis}\label{sec:theory}

To facilitate the upcoming studies, we first introduce some notations. These will assist us in the following consistency analysis of the RCL estimator.
Recall that $Y^{i}$ is the potential outcome under the treatment $d^{i}$, and {$\mathscr{I}$} is the sample set in which the units are all treated with {$d^{i}$} while {$\mathscr{I}^c$} is the sample set in which the units are not treated with {$d^{i}$}. We use $Y^{i;F}$ as the factual outcome if an individual from {$\mathscr{I}$} actually receives $d^i$, and $Y^{i;CF}$ as the counterfactual outcome if an individual from {$\mathscr{I}^c$} receives $d^i$. 
Based on the introduced notations, we can define two \textit{residual differences} $\xi_{m}^{i;F}$ and $\xi_{\bar{m}}^{i;CF}$ for $m\in\mathscr{I}$ and $\bar{m}\in\mathscr{I}^{c}$. Mathematically, $\xi_m^{i;F}:=Y_m^{i;F}-g^{i}(\mathbf{Z}_m)$ if $m \in \mathscr{I}$ and $\xi_{\bar{m}}^{i;CF}:=Y_{\bar{m}}^{i;CF}-g^{i}(\mathbf{Z}_{\bar{m}})$ if $\bar{m} \in \mathscr{I}^c$. 

We give some assumptions on statistical properties between $\xi_{m}^{i;F}$ and $\xi_{\bar{m}}^{i;CF}$. To be precise, we study $\xi_{m}^{i;F}$ and $\xi_{\bar{m}}^{i;CF}$ conditioning on $\mathbf{Z}$. It is an abbreviation of conditioning on $(\mathbf{Z_m},\mathbf{Z_{\bar{m}}})$ or on all $\mathbf{Z_m}$ and $\mathbf{Z_{\bar{m}}}$, and this applies throughout in the rest of the paper. First, $\xi_{m}^{i;F}$ and $\xi_{\bar{m}}^{i;CF}$ are independent conditioning on $\mathbf{Z}$ due to the SUTVA assumption and have the same distribution, i.e.,
\begin{equation*}
\begin{aligned}
\xi_{m}^{i;F}\overset{d}{=}\xi_{\bar{m}}^{i;CF}\mid \mathbf{Z}\quad\text{and}\quad\xi_{m}^{i;F}\perp\!\!\!\perp\xi_{\bar{m}}^{i;CF}\mid \mathbf{Z}.
\end{aligned}
\end{equation*}
From the above assumptions, we have $\mathbb{E}\big[(\xi_{m}^{i;F})^{r}\mid\mathbf{Z}\big]=\mathbb{E}\big[(\xi_{\bar{m}}^{i;CF})^{r} \mid\mathbf{Z}\big]$, $\forall\;r,\;m$, and $\bar{m}$.

From the construction of our RCL estimator, it is easy to find that for $\bar{m}\in\mathscr{I}^c$, we replace $\xi_{\bar{m}}^{i;CF}$ with a $\xi_{m}^{i;F}$ for a random $m$ picked from $\mathscr{I}$. For the convenience of mathematical derivations, we denote an i.i.d. copy of this $\xi_{m}^{i;F}$ as $\xi_{\bar{m}}^{i;F}$. So for any $\bar{m}\in\mathscr{I}^c$, both $\xi_{\bar{m}}^{i;CF}$ and $\xi_{\bar{m}}^{i;F}$ are well-defined now. Moreover, $\xi_{\bar{m},u}^{i;F}$ is the $u^{\text{th}}$ i.i.d. copy of $\xi_{m}^{i;F}$ similarly.

In the remaining sequel, we investigate the consistency of our RCL estimator based on some basics of orthogonal machine learning theory. To start with, we state the assumptions on the consistency rates for the nuisance parameters in Assumption \ref{RCL assumption}. Only the assumptions that are helpful in studying the consistency of our RCL estimator are stated. Other assumptions that concentrate on the conditions of the scores, including orthogonality, identifiability, non-degeneracy, smoothness, and the regularity of moments can be found in \cite{mackey2018orthogonal} and references therein.

\begin{assumption}\label{RCL assumption}
Given that the nuisance parameters and the true nuisance parameters are $(\hat{g}^{i},\hat{\pi}^{i})$ and $(g^{i},\pi^{i})$, and $S_k=\{\bm{\alpha}=(\alpha_{1},\alpha_{2})\in\mathbb{Z}^{2}_{\geq 0}:\left\|\bm{\alpha}\right\|_{1}\leq k\}$, we have
\begin{align*}
\text{a) } \quad & \mathbb{E}\big[\left|\hat{g}^{i}(\mathbf{Z})-g^{i}(\mathbf{Z})\right|^{4\alpha_{1}}\left|\hat{\pi}^{i}(\mathbf{Z})-\pi^{i}(\mathbf{Z})\right|^{4\alpha_{2}}\mid \hat{g}^{i},\hat{\pi}^{i}\big] \\
& \overset{p}{\longrightarrow} 0, \forall\;\bm{\alpha}\in S_k,\\
\text{b) } \quad & N^{\frac{1}{2}}\sqrt{\mathbb{E}\big[\left|\hat{g}^{i}(\mathbf{Z})-g^{i}(\mathbf{Z})\right|^{2\alpha_{1}}\left|\hat{\pi}^{i}(\mathbf{Z})-\pi^{i}(\mathbf{Z})\right|^{2\alpha_{2}}\mid \hat{g}^{i},\hat{\pi}^{i}\big]}\\
& \overset{p}{\longrightarrow} 0, \forall\;\bm{\alpha}\in \{\bm{\alpha}\in\mathbb{Z}^{2}_{\geq 0}:\left\|\bm{\alpha}\right\|_{1}\leq k+1\}\backslash S_k.
\end{align*}
\end{assumption}

\subsection{Consistency Results}\label{sec:Consistency Results}

For notational convenience, we rewrite our RCL estimator $\hat{\theta}^{i}_{RCL}$ and denote it as $\hat{\theta}^{i}_{N}$. Indeed, we have
\begin{equation}
{
\begin{aligned}
\hat{\theta}^{i}_{N}=&\underbrace{\frac{1}{N}\underset{m=1}{\overset{N}{\sum}}\hat{g}^{i}(\mathbf{Z}_{m})}_{(a)}+\underbrace{\frac{1}{N}\underset{m\in \mathscr{I}}{\overset{}{\sum}}(Y^{i;F}_{m}-\hat{g}^{i}(\mathbf{Z}_{m}))\hat{A}_{m}^{i}}_{(b)}\\
&+\underbrace{\frac{1}{R}\overset{R}{\underset{u=1}{\sum}}\left[\frac{1}{N}\underset{m\in \mathscr{I}^{c}}{\overset{}{\sum}}\hat{\xi}_{m,u}^{i;F} \hat{A}_{m}^{i}\right]}_{(c)}.
\end{aligned}
}\label{eqn:theta_i_N}
\end{equation}
Here, $A^i_m=A(D_m,\mathbf{Z}_m;\pi^i)$ and $\hat{A}^i_m$ is the estimate of $A^i_m$. Besides, we define two quantities $\hat{\tilde{\theta}}_{N}^{i}$ and $\hat{\bar{\theta}}_{N}^{i}$. They are
{
\begin{align}
\hat{\tilde{\theta}}_{N}^{i}&=\frac{1}{N}\underset{m=1}{\overset{N}{\sum}}\hat{g}^{i}(\mathbf{Z}_{m})+\frac{1}{N}\underset{m\in \mathscr{I}}{\overset{}{\sum}}(Y^{i;F}_{m}-\hat{g}^{i}(\mathbf{Z}_{m}))\hat{A}_{m}^{i}\nonumber\\
&+\frac{1}{N}\underset{m\in \mathscr{I}^{c}}{\overset{}{\sum}}\hat{\xi}^{i;CF}_{m}\hat{A}_{m}^{i},\label{eqt:higher-order orthogonal condition before estimator in appendix}\\
\hat{\bar{\theta}}_{N}^{i}&=\frac{1}{N}\underset{m=1}{\overset{N}{\sum}}\hat{g}^{i}(\mathbf{Z}_{m})+\frac{1}{N}\underset{m\in \mathscr{I}}{\overset{}{\sum}}(Y^{i;F}_{m}-\hat{g}^{i}(\mathbf{Z}_{m}))\hat{A}_{m}^{i}\nonumber\\
&+\frac{1}{N}\underset{m\in \mathscr{I}^{c}}{\overset{}{\sum}}\hat{\xi}_{m}^{i;F}\hat{A}_{m}^{i}.\label{eqt:higher-order orthogonal condition final estimator in appendix}
\end{align}
}\noindent
We also define
\begin{equation*}
{
\begin{aligned}
\kappa_{N}^{i;F}&=\frac{1}{N}\underset{m\in \mathscr{I}^{c}}{\overset{}{\sum}}\xi_{m}^{i;F}A_{m}^{i},\quad\hat{\kappa}_{N}^{i;F}=\frac{1}{N}\underset{m\in \mathscr{I}^{c}}{\overset{}{\sum}}\hat{\xi}_{m}^{i;F}\hat{A}_{m}^{i},\\
\kappa_{N}^{i;CF}&=\frac{1}{N}\underset{m\in \mathscr{I}^{c}}{\overset{}{\sum}}\xi_{m}^{i;CF}A_{m}^{i},\quad\hat{\kappa}_{N}^{i;CF}=\frac{1}{N}\underset{m\in \mathscr{I}^{c}}{\overset{}{\sum}}\hat{\xi}_{m}^{i;CF}\hat{A}_{m}^{i},\\
\kappa_{R,N}^{i;F}&=\frac{1}{R}\overset{R}{\underset{u=1}{\sum}} \big[\frac{1}{N}\underset{m\in \mathscr{I}^{c}}{\overset{}{\sum}}\xi_{m,u}^{i;F} A_{m}^{i}\big],\\
\hat{\kappa}_{R,N}^{i;F}&=\frac{1}{R}\underset{u=1}{\overset{R}{\sum}}\big[\frac{1}{N}\underset{m\in \mathscr{I}^{c}}{\overset{}{\sum}}\hat{\xi}_{m,u}^{i;F}\hat{A}_{m}^{i}\big].
\end{aligned}
}
\end{equation*}
Then \eqref{eqn:theta_i_N}, \eqref{eqt:higher-order orthogonal condition before estimator in appendix}, and \eqref{eqt:higher-order orthogonal condition final estimator in appendix} can be rewritten as
{
\begin{align}
\hat{\theta}_{N}^{i}&=\frac{1}{N}\underset{m=1}{\overset{N}{\sum}}\hat{g}^{i}(\mathbf{Z}_{m})\nonumber
+\frac{1}{N}\underset{m\in \mathscr{I}}{\overset{}{\sum}}(Y^{i;F}_{m}-\hat{g}^{i}(\mathbf{Z}_{m}))\hat{A}_{m}^{i}+\hat{\kappa}_{R,N}^{i;F},\tag{\ref{eqn:theta_i_N}}\\
\hat{\tilde{\theta}}_{N}^{i}&=\frac{1}{N}\underset{m=1}{\overset{N}{\sum}}\hat{g}^{i}(\mathbf{Z}_{m})\nonumber
+\frac{1}{N}\underset{m\in \mathscr{I}}{\overset{}{\sum}}(Y^{i;F}_{m}-\hat{g}^{i}(\mathbf{Z}_{m}))\hat{A}_{m}^{i}+\hat{\kappa}_{N}^{i;CF},\tag{\ref{eqt:higher-order orthogonal condition before estimator in appendix}}\\
\hat{\bar{\theta}}_{N}^{i}&=\frac{1}{N}\underset{m=1}{\overset{N}{\sum}}\hat{g}^{i}(\mathbf{Z}_{m})\nonumber
+\frac{1}{N}\underset{m\in \mathscr{I}}{\overset{}{\sum}}(Y^{i;F}_{m}-\hat{g}^{i}(\mathbf{Z}_{m}))\hat{A}_{m}^{i}+\hat{\kappa}_{N}^{i;F},\tag{\ref{eqt:higher-order orthogonal condition final estimator in appendix}}
\end{align}
}\noindent
for simplicity. In addition, we have to use two lemmas and two propositions to study the consistency of $\hat{\theta}_{N}^{i}$. We state them with the proofs of the two lemmas as follows. The full proofs of the remaining Proposition \ref{lemma:difference from the statistical standpoint} and Proposition \ref{lemma:unbiasedness and consistency} are left in Appendix \ref{appendix:B} and Appendix \ref{appendix:C} respectively.

\begin{lemma}\label{lemma:simple lemma}	
Given two sequences of random variables $(X_{N})_{N=1}^{\infty}$ and $(Y_{N})_{N=1}^{\infty}$ such that $X_{N}\overset{d}{=}Y_{N}$. If $X_{N}\overset{p}{\rightarrow}c$ for some constant $c$, then $Y_{N}\overset{p}{\rightarrow}c$.	
\end{lemma}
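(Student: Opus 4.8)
The plan is to argue directly from the definition of convergence in probability, exploiting the fact that the relevant events are measurable functions of a single random variable, so that equality in distribution transfers the required probability bounds. Recall that $X_{N}\overset{p}{\rightarrow}c$ means precisely that for every $\epsilon>0$ we have $\mathbb{P}\{|X_{N}-c|\geq\epsilon\}\to 0$ as $N\to\infty$. The goal is to establish the analogous statement for $Y_{N}$, and the whole argument rests on a single observation about how $\overset{d}{=}$ interacts with these events.

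First I would fix $\epsilon>0$ and observe that the set $B_{\epsilon}:=\{x\in\mathbb{R}:|x-c|\geq\epsilon\}$ is a Borel set, being the preimage of $[\epsilon,\infty)$ under the continuous (hence measurable) map $x\mapsto|x-c|$. Since $X_{N}\overset{d}{=}Y_{N}$, the two random variables induce the same law on $\mathbb{R}$, so they assign equal probability to every Borel set; in particular $\mathbb{P}\{X_{N}\in B_{\epsilon}\}=\mathbb{P}\{Y_{N}\in B_{\epsilon}\}$, which is exactly the equality $\mathbb{P}\{|X_{N}-c|\geq\epsilon\}=\mathbb{P}\{|Y_{N}-c|\geq\epsilon\}$. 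I would then conclude immediately: the left-hand side tends to $0$ by the hypothesis $X_{N}\overset{p}{\rightarrow}c$, hence so does the right-hand side, and since $\epsilon>0$ was arbitrary this is precisely the assertion $Y_{N}\overset{p}{\rightarrow}c$.

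There is no genuine obstacle here; the only point worth flagging is conceptual rather than technical. The argument succeeds because the limit $c$ is a \emph{constant}, so for each fixed $N$ the event $\{|Y_{N}-c|\geq\epsilon\}$ is a function of $Y_{N}$ alone, and thus its probability depends only on the marginal law of $Y_{N}$, which the hypothesis guarantees is shared with $X_{N}$. Had the limit been a random variable $Z$ rather than a constant, the equality of marginals $X_{N}\overset{d}{=}Y_{N}$ would \emph{not} suffice, since convergence in probability to $Z$ would involve the joint law of $(Y_{N},Z)$, about which we would know nothing. The restriction to a constant limit is therefore exactly what makes the distributional hypothesis adequate, and this is the feature I would want to make explicit so that the reader sees why the lemma is both true and genuinely useful in the preceding propositions, where the limiting quantity is the deterministic causal parameter $\theta^{i}$.
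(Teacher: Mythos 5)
Your proof is correct, and its skeleton is the same as the paper's: both arguments reduce the lemma to the single identity $\mathbb{P}\{|X_{N}-c|\geq\epsilon\}=\mathbb{P}\{|Y_{N}-c|\geq\epsilon\}$, which holds because this event depends on the random variable alone, and then invoke the hypothesis on $X_{N}$. The difference is in how that identity is justified, and here your version is strictly better. The paper writes $\mathbb{P}\{|X_{N}-c|\geq\epsilon\}=\int_{|z-c|\geq\epsilon}f_{X_{N}}(z)\,dz$ and argues $f_{X_{N}}=f_{Y_{N}}$, which tacitly assumes that $X_{N}$ and $Y_{N}$ admit density functions --- an assumption that appears nowhere in the lemma's hypotheses and is not verified for the estimators $\tilde{\theta}_{N}^{i}$, $\bar{\theta}_{N}^{i}$ to which the lemma is ultimately applied (these involve the discrete indicators $\mathbf{1}_{\{D_{m}=d^{i}\}}$, so absolute continuity is not automatic). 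You instead work directly with the laws: $X_{N}\overset{d}{=}Y_{N}$ means the push-forward measures agree on all Borel sets, and $B_{\epsilon}=\{x:|x-c|\geq\epsilon\}$ is Borel, so the two probabilities coincide with no regularity assumption whatsoever. Your closing remark --- that the argument works only because $c$ is a constant, since convergence to a random limit $Z$ would involve the joint law of $(Y_{N},Z)$, which marginal equality does not control --- is also a genuine improvement in exposition: it identifies precisely why the lemma is sound as stated and why it suffices for the consistency proof, where the limit is the deterministic parameter $\theta^{i}$.
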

\begin{proof}{proof}
Let $f_{X_{N}}(\cdot)$ and $f_{Y_{N}}(\cdot)$ be the density functions of the random variables $X_{N}$ and $Y_{N}$ respectively. Since $X_{N}\overset{d}{=}Y_{N}$, $f_{X_{N}}(\cdot)=f_{Y_{N}}(\cdot)$. Hence, $\mathbb{P}\left\{\left|X_{N}-c\right|\geq \epsilon\right\}=\int_{|z-c|\geq\epsilon}f_{X_{N}}(z)dz=\int_{|z-c|\geq\epsilon}f_{Y_{N}}(z)dz=\mathbb{P}\left\{\left|Y_{N}-c\right|\geq \epsilon\right\}$. Consequently, $X_{N}\overset{p}{\rightarrow}c$ implies $Y_{N}\overset{p}{\rightarrow}c$.
\end{proof}

\begin{lemma}\label{lemma:simple lemma2}	
Given random variables $X$, $Y$, $E$, $Z$, if $(X \overset{d}{=} Y) \mid Z$, $(X \perp \!\!\! \perp E) \mid Z$, $(Y \perp \!\!\! \perp E) \mid Z$, then $Xh(E,Z) \overset{d}{=} Yh(E,Z)$ for any function $h$.
\end{lemma}
\begin{proof}{proof}
	Define $f_Z(z)$ as the density function of $Z$, $f_{X|Z}(x|z)$ as the conditional density function of $X|Z$. The same applies for $f_{Y|Z}(y|z)$ and $f_{E|Z}(e|z)$. In addition,  $f_{X,E|Z}(x,e|z)$ and $f_{Y,E|Z}(y,e|z)$ are the conditional joint density functions of $X,E|Z$ and $Y,E|Z$ respectively. For a measurable set $\mathcal{A}$, we have
		\begin{equation*}
		{\small
		\begin{aligned}
		&\mathbb{P}\{Xh(E,Z)\in \mathcal{A}\}\\
		=&\int_{\Omega_{Z}}\!\!\left\{\iint_{\Omega_{X} \times \Omega_{E}} \!\!\mathbf{1}_{\{xh(e,z)\in \mathcal{A}\}}f_{X,E|Z}(x,e|z)dxde\right\}\!\!f_{Z}(z)dz\\
		\overset{*}{=}&\int_{\Omega_{Z}}\!\!\left\{\iint_{\Omega_{X} \times \Omega_{E}} \!\!\mathbf{1}_{\{xh(e,z)\in \mathcal{A}\}}f_{X|Z}(x|z)f_{E|Z}(e|z)dxde\right\}\!\!f_{Z}(z)dz\\
		\overset{\triangle}{=}&\int_{\Omega_{Z}}\!\!\left\{\iint_{\Omega_{Y} \times \Omega_{E}} \!\!\mathbf{1}_{\{yh(e,z)\in \mathcal{A}\}}f_{Y|Z}(y|z)f_{E|Z}(e|z)dyde\right\}\!\!f_{Z}(z)dz\\
		\overset{\square}{=}&\int_{\Omega_{Z}}\!\!\left\{\iint_{\Omega_{Y} \times \Omega_{E}} \!\!\mathbf{1}_{\{yh(e,z)\in \mathcal{A}\}}f_{Y,E|Z}(y,e|z)dyde\right\}\!\!f_{Z}(z)dz\\
		=&\mathbb{P}\{Yh(E,Z)\in \mathcal{A}\}.
		\end{aligned}	
		}
		\end{equation*}
$*$ holds since $(X \perp \!\!\! \perp E) \mid Z$, $\triangle$ holds since $(X \overset{d}{=} Y) \mid Z$, and $\square$ holds since $(Y \perp \!\!\! \perp E) \mid Z$.
\end{proof}

\begin{proposition}\label{lemma:difference from the statistical standpoint}
Suppose {$\mathbb{E}\left[(\xi^{i;F}_{m})^{2}\mid\mathbf{Z}\right]$} and {$(A^{i}_m)^2$} exist such that $\mathbb{E}\left[(A_m^{i})^2\mathbb{E}\left[(\xi_m^{i;F})^{2}\mid\mathbf{Z}\right]\right]$ is finite for all $m$. We have {$\kappa_{N}^{i;CF}-\kappa_{N}^{i;F}\overset{p}{\rightarrow} 0$} when {$N\rightarrow \infty$}.
\end{proposition}

\begin{proposition}\label{lemma:unbiasedness and consistency}
Suppose that, conditioning on {$\mathbf{Z}$}, {$\xi_{m,u}^{i;F}$} are i.i.d. of {$\xi_{m}^{i;F}$} and {$\xi_{m,u}^{i;F}$} are i.i.d. of {$\xi_{m,\bar{u}}^{i;F}$}, {$\forall u,\;\bar{u}\in\{1,2,\dots,R\}$}. We have
\begin{equation*}
\kappa_{N}^{i;F}-\kappa_{R,N}^{i;F}\overset{p}{\longrightarrow}0\quad \textit{when}\quad N\rightarrow \infty.\label{item:statistical difference 2}
\end{equation*}
\end{proposition}

Now, we are ready to investigate if the estimator $\hat{\theta}_{N}^{i}$ is a consistent estimator of $\theta^{i}$. Our goal is to show that $\mathbb{P}_{\hat{\rho}}\left\{\left|\hat{\theta}_{N}^{i}-\theta^{i}\right| \geq \epsilon\right\}\overset{p}{\rightarrow} 0$, where $\hat{\rho}=(\hat{g}^i, \hat{\pi}^i)$ are the estimated nuisance parameters. Before presenting the proof, we notice that for $m\in \mathscr{I}^{c}$, $\left(\hat{\xi}^{i;F}\overset{d}{=}\hat{\xi}^{i;CF}\right)\mid\mathbf{Z}$  when $\hat{g}^{i}$ and $g^{i}$ satisfy Assumption \ref{RCL assumption}. This is because that $\hat{\xi}^{i;F}=Y^{i;F}-\hat{g}^{i}(\mathbf{Z})=\xi^{i;F} + g^{i}(\mathbf{Z})-\hat{g}^{i}(\mathbf{Z})$, $\hat{\xi}^{i;CF}=Y^{i;CF}-\hat{g}^{i}(\mathbf{Z})=\xi^{i;CF} + g^{i}(\mathbf{Z})-\hat{g}^{i}(\mathbf{Z})$, {$(\xi^{i;F} \perp \!\!\! \perp D) \mid \mathbf{Z}$}, {$(\xi^{i;CF} \perp \!\!\! \perp D) \mid \mathbf{Z}$}, and $\left(\xi^{i;F}\overset{d}{=}\xi^{i;CF}\right)\mid\mathbf{Z}$. Our concluding result is that under the assumptions stated before, the estimator $\hat{\theta}_{N}^{i}$ is indeed a consistent estimator of $\theta^{i}$. Due to the space limit, we defer the full proof of consistency in Appendix \ref{appendix:D}.

\section{Numerical Studies}\label{sec:experiment}
In this section, we compare the performances of our RCL estimators with the DR, IPW, DML estimators and their variants through simulation and empirical experiments. In both experiments, we consider three types of regressors for $g^i$: Lasso, Random Forests (RF), and Multi-layer Perceptron (MLP); and three types of classifiers for $\pi^i$: Logistic Regression (LR), RF, and MLP. We combine the regression model A and the classification model B to estimate {$g^{i}$} and {$\pi^i$} respectively, and denote the combination as A+B, e.g., Lasso+LR. In the empirical experiments, we consider two additional state-of-the-art neural network models in causal learning: TARNet \cite{shalit2017estimating} and Dragonnet \cite{shi2019adapting}. All the experiments are run on Dell 3640 with an Intel(R) Xeon(R) W-1290P CPU at 3.70GHz, and a NVIDIA GeForce RTX 2080Ti GPU.

The RCL estimator with the values of {$r$} and {$k$} (see Theorem \ref{thm:$k^{th}$-order orthogonal condition score function}) is denoted as RCL{$_{r,k}$}.
For all the experiments throughout the paper, we use the following two metrics to evaluate the performance:
\vspace{-0.1cm}
\begin{subequations}
\begin{equation}
{
\begin{gathered}
\epsilon_{ATE}=\frac{1}{M}\underset{m=1}{\overset{M}{\sum}}\epsilon_{ATE;m};
\end{gathered} \label{eqt:metric}
}
\end{equation}
\vspace{-0.2cm}
\begin{equation}
{
\begin{gathered}
\sigma_{ATE}=\sqrt{\frac{1}{M-1}\underset{m=1}{\overset{M}{\sum}}\left(\epsilon_{ATE;m}-\epsilon_{ATE}\right)^{2}}.
\end{gathered} \label{eqt:metric variance}
}
\end{equation}
\end{subequations}
Here, {$\epsilon_{ATE;m}$} is the relative error in the $m^{\mathrm{th}}$ experiment or simulation trial where 
$$\epsilon_{ATE;m}=\frac{\underset{1 \leq i<j \leq n}{\overset{}{\sum}}\left|\hat{\theta}^{i,j;m}-\theta^{i,j;m}\right|}{\underset{1 \leq i<j \leq n}{\overset{}{\sum}}\left|\theta^{i,j;m}\right|}$$ 
with {$\theta^{i,j;m}$} and {$\hat{\theta}^{i,j;m}$} being the true ATE and the estimated ATE between the treatment {$d^i$} and the treatment {$d^j$} in the {$m^{\mathrm{th}}$} experiment. {$n$} is the number of treatments and {$M$} is the number of experiments or simulation trials. {$\epsilon_{ATE}$} and {$\sigma_{ATE}$} are the average and standard deviation of these relative errors.

\subsection{Numerical Studies on Simulation Datasets}\label{sec:Numerical Studies on Simulated Data}

We first introduce the data generating process (DGP) for the simulation experiments. Given the covariates {$\mathbf{Z}=(Z_{1},...,Z_{p})^{T}$} which follow a standard multivariate Gaussian distribution, the treatment variable {$D$} has the treatment space {$\{d^{1}, d^{2}, d^{3}\}$} with the corresponding probability
\begin{equation}\label{eqt:prob_i}
{
\begin{aligned}
\pi^{i}(\mathbf{Z}) = \mathbb{P}\{D=d^{i} | \mathbf{Z}\} = \frac{\exp\left(\underset{u=1}{\overset{\lfloor p\cdot r_{c}\rfloor}{\sum}}\beta_{iu}Z_u\right)}{\underset{j=1}{\overset{3}{\sum}}\exp\left(\underset{u=1}{\overset{\lfloor p\cdot r_{c}\rfloor}{\sum}}\beta_{ju}Z_u\right)},
\end{aligned}
}
\end{equation}
where the values of coefficients {$\beta_{iu}$} are randomly picked from the uniform distribution {$\mathcal{U}(-0.1, 0.1)$}. {$r_c$} is the confounding ratio ranging from {$0$} to {$1$}, and the number of covariates in {$\mathbf{Z}$} used to generate {$D$} is {$\lfloor p\cdot r_{c}\rfloor\in\mathbb{N}$}. For example, if {$p=10$} and {$r_{c} = 0.56$}, then {$p\cdot r_c = 5.6$} and {$\lfloor p\cdot r_{c}\rfloor= 5$}. We generate the potential outcome {$Y^{i}$} for treatment indices {$i \in \{1,2,3\}$} as
\begin{equation}
{
\begin{aligned}
	Y^{i}& = g^{i}(\mathbf{Z})+\xi^{i} = e^{\sqrt{d^{i}}}\left(\bm{a}_{i}^{T}\mathbf{Z}+1\right)^2+\xi^{i},
\end{aligned}
}
\end{equation}
where {$\bm{a}_{i}$} is a {$p\times 1$} constant vector whose elements are randomly chosen from {$\mathcal{U}(0.1, 0.5)$}. We also set {$d^{1}=0.1$}, {$d^{2}=0.5$}, {$d^{3}=1$}, {$\xi^{1} \sim \mathcal{N}(0,9)$}, {$\xi^{2} \sim \mathcal{N}(0,4)$}, and {$\xi^{3} \sim \mathcal{N}(0,1)$}. Next, we generate {$N$} i.i.d. observations based on the DGP. Suppose the realized covariates of the {$m^{\mathrm{th}}$} individual are {$\mathbf{z}_{m}$}, then the actual treatment {$d_{m}$} will be $d^{k}$, where $k$ is determined by {$k=\underset{u \in \{1,2,3\}}{\arg\max} \; \pi^{u}(\mathbf{z}_{m})$}. Under the actual treatment $d^{k}$, the observed factual outcome {$y_{m}$} will correspondingly be a realization of {$Y^{k}$}.

For the simulation experiments, we compute the DR, DML, and our RCL estimators with different values of {$r$} and {$k$} (see Theorem \ref{thm:$k^{th}$-order orthogonal condition score function}), which is denoted by RCL{$_{r,k}$}. We then use {$\epsilon_{ATE}$} in \eqref{eqt:metric} with {$n=3$} and {$M=100$} to evaluate the performance of different estimators for each combination of the regressor {$g^{i}$} and the classifier {$\pi^{i}$} (denoted as regressor+classifier). We split every dataset by the ratio {$56\%/14\%/30\%$} as training/validation/test sets.

\textbf{Consistency of RCL estimators.}  In this part, we first set {$r_c=1$}, {$p=5$}, and let the number of observations {$N$} vary in {$\{1, 2, 4, 8, 16\}\times 10000$}. We check the consistency of RCL estimators through simulations and report {$\epsilon_{ATE}$} in Fig. \ref{Figure:ATE_convergency_r=4}. The result indicates that the error reduces when the sample size increases for our RCL estimators. Besides, we also find that when {$g^{i}$} is fitted well, {$\forall i$} (e.g., when the regressor is chosen as Lasso or RF), RCL{$_{2,2}$} performs better than DR, DML, and other RCL estimators. On the other hand, when {$g^{i}$} is not fitted well for some {$i$}, e.g., when the regressor is chosen as MLP, the DML and the RCL estimators with $k=1$ can significantly correct the bias thanks to the doubly robust property. In this case, despite similar performances produced by RCL{$_{2,1}$} estimator and the DML estimator, RCL{$_{2,1}$} still has a smaller {$\epsilon_{ATE}$}.

\begin{figure}[t]
	\centering	
	\includegraphics[width=0.9\columnwidth]{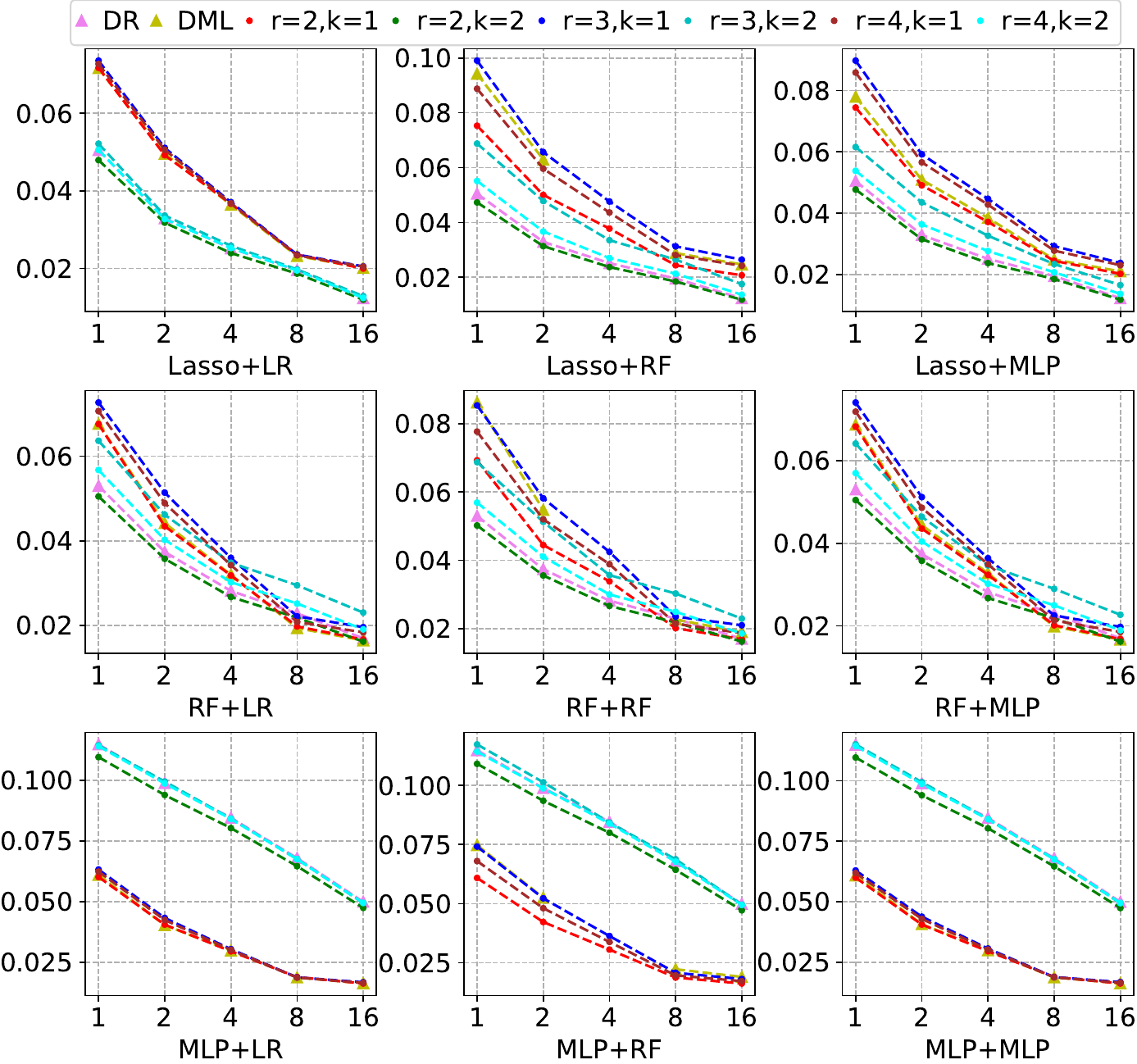}
	\caption{Plots of {{$\epsilon_{ATE}$}} versus the varying {$N$}: DR v.s. DML v.s. RCL.}
	\label{Figure:ATE_convergency_r=4}
\end{figure}

\begin{figure}[t]
\centering
\includegraphics[width=0.9\columnwidth]{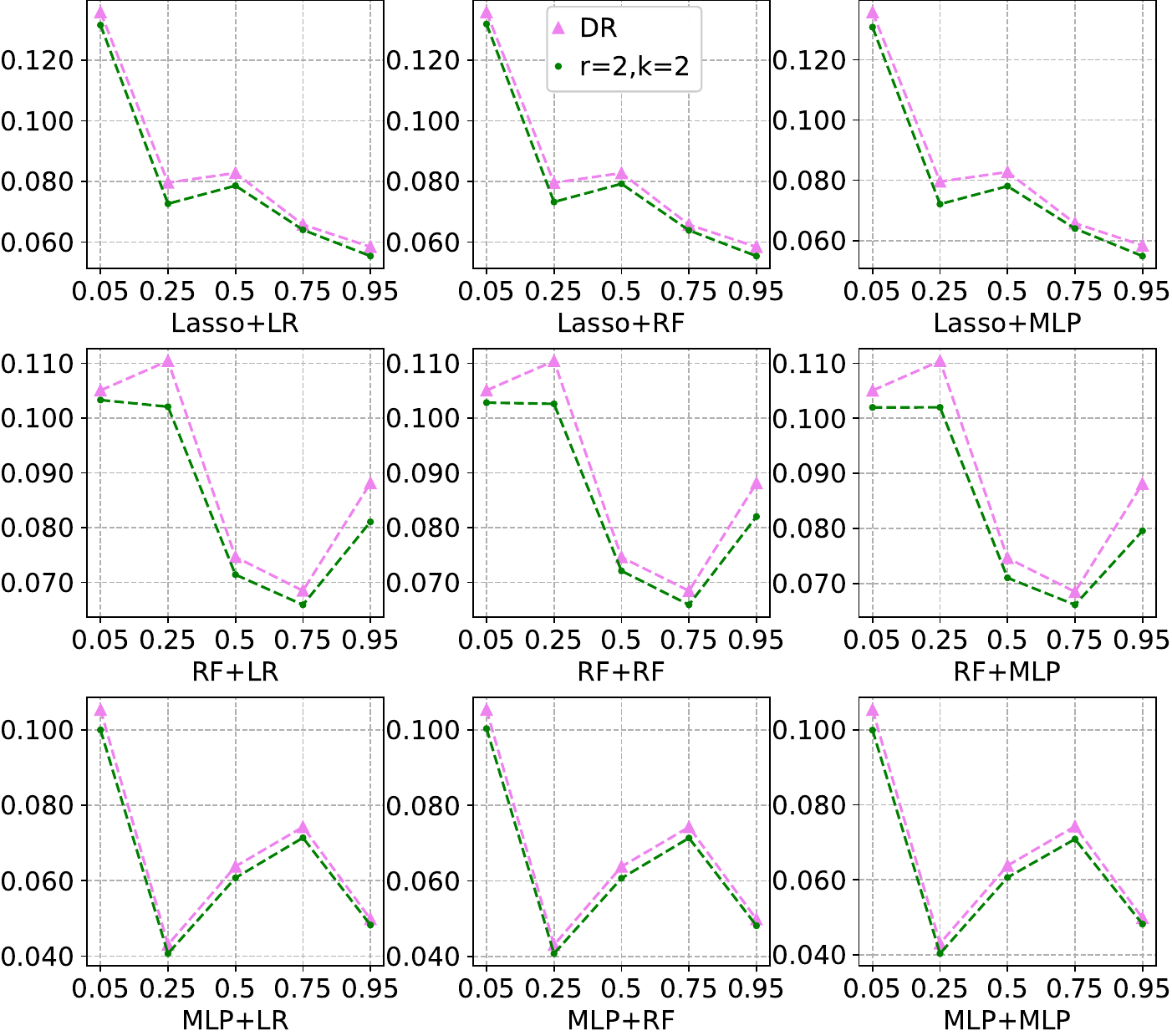}
\caption{Plots of {$\epsilon_{ATE}$} versus the varying {$r_c$}: DR v.s. RCL{$_{2,2}$}.}
\label{Figure:confounding_ratio_DRvsROL}
\end{figure}

\begin{figure}[t]
\centering
\includegraphics[width=0.9\columnwidth]{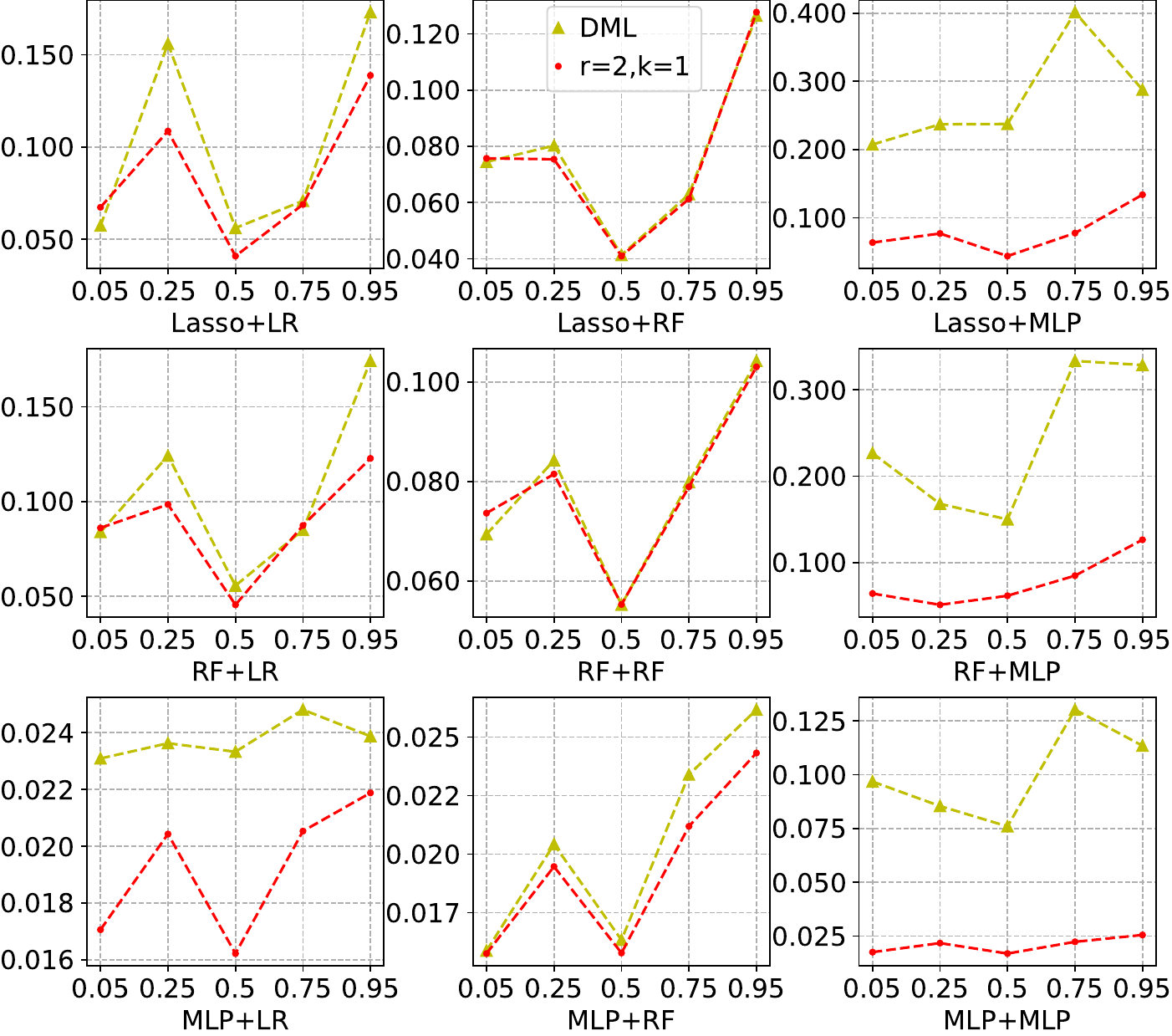}
\caption{Plots of {$\epsilon_{ATE}$} versus the varying {$r_c$}: DML v.s. RCL{$_{2,1}$}.}
\label{Figure:confounding_ratio_DMLvsROL}
\end{figure}

\begin{figure}[t]
\centering
\includegraphics[width=0.9\columnwidth]{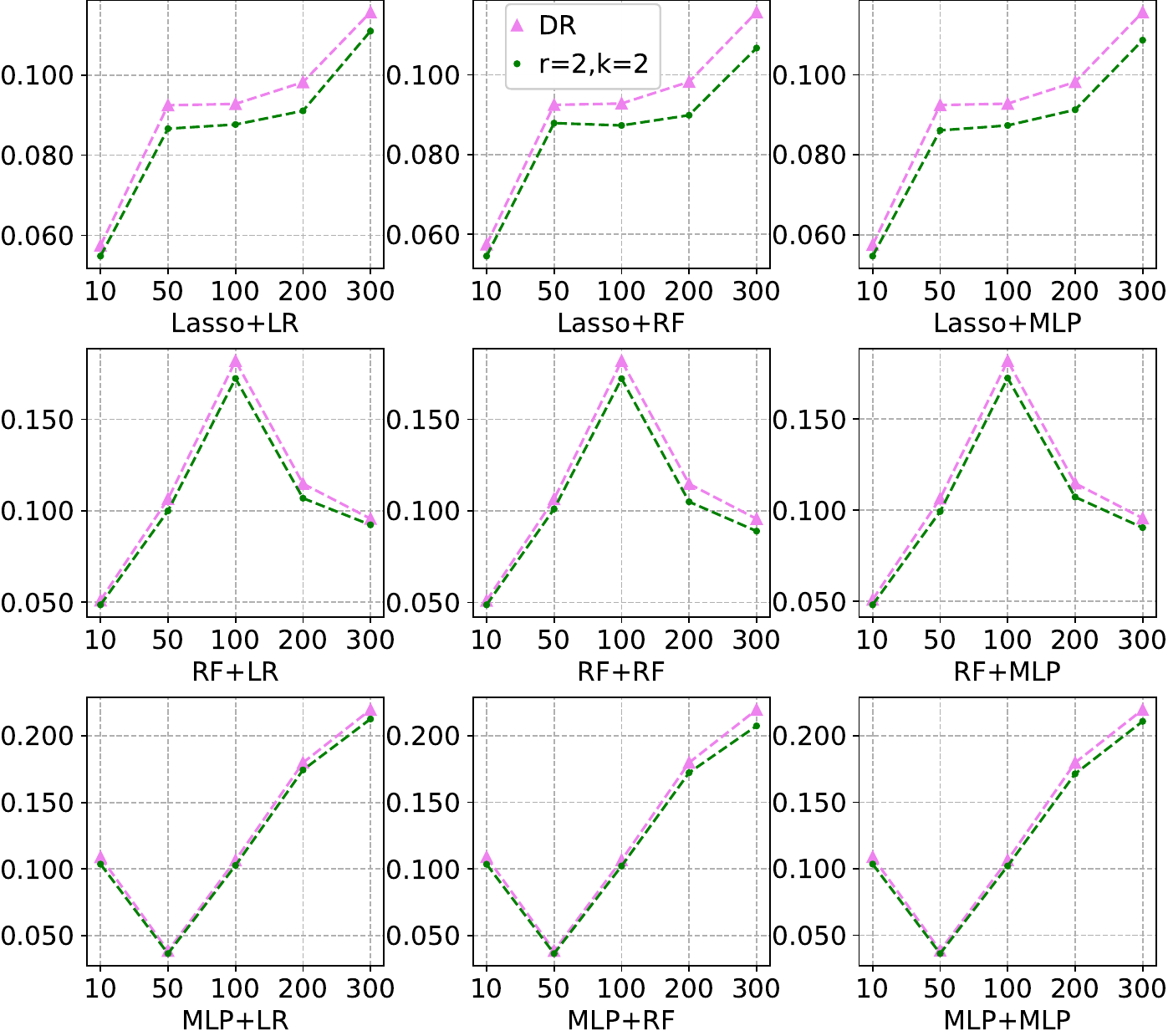}
\caption{Plots of {$\epsilon_{ATE}$} versus the varying {$p$}: DR v.s. RCL{$_{2,2}$}.}
\label{Figure:dim_DRvsROL}
\end{figure}

\begin{figure}[t]
\centering
\includegraphics[width=0.9\columnwidth]{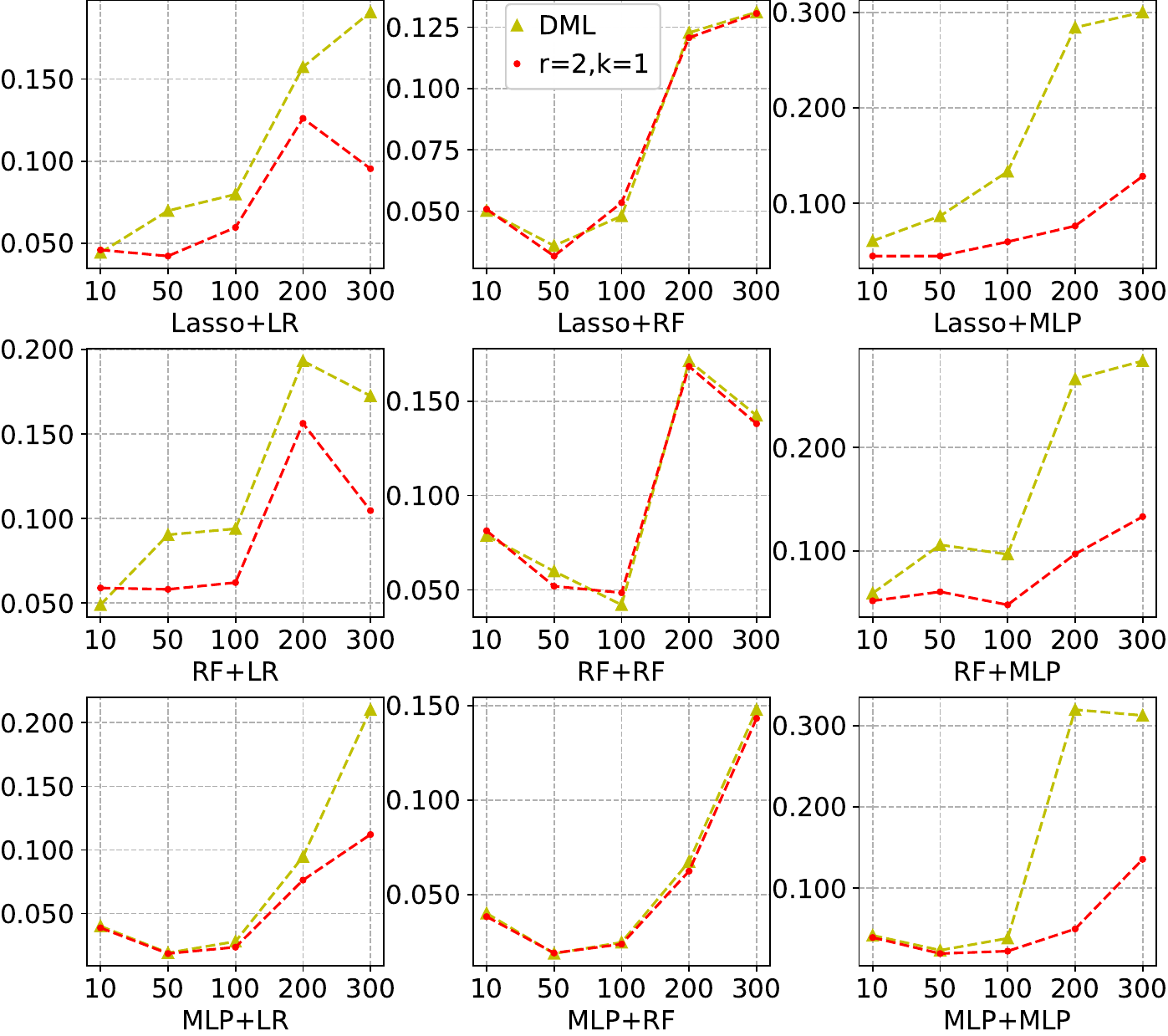}
\caption{Plots of {$\epsilon_{ATE}$} versus the varying {$p$}: DML v.s. RCL{$_{2,1}$}.}
\label{Figure:dim_DMLvsROL}
\end{figure}

\textbf{Varying {$r_{c}$} and {$p$}.} In the following experiments, we mainly compare RCL{$_{2,2}$} with DR, and RCL{$_{2,1}$} with DML since the above simulation experiments indicate that RCL{$_{2,2}$} and DR perform similarly, and RCL{$_{2,1}$} has similar trends to DML. We set {$N=10000$} and plot {$\epsilon_{ATE}$} produced by each model combination A+B versus i) different {$r_{c}$} with {$p = 100$} in Fig. \ref{Figure:confounding_ratio_DRvsROL} and Fig. \ref{Figure:confounding_ratio_DMLvsROL}; ii) different {$p$} with {$r_{c}=1$} in Fig. \ref{Figure:dim_DRvsROL} and Fig. \ref{Figure:dim_DMLvsROL}. From all the four figures, we observe that the DML estimator is sensitive to the change of {$r_{c}$} and {$p$}, especially when the classifier is MLP. As analyzed before, if the estimation error of {$\pi^{i}(\mathbf{z})$} is non-negligible for some $\mathbf{z}$, the term {$\frac{1}{\hat{\pi}^i(\cdot)}$} of the DML estimator often gives extreme values especially when {$\hat{\pi}^{i}(\mathbf{z})$} is small, leading to pronounced estimation error of ATE. Indeed, any ATE estimator that involves the inverse propensity score term might face this error-compounding issue. By contrast, our RCL estimators are less volatile to the variation of {$r_{c}$} and {$p$} regardless of the choice of the classifier. For example, in Fig. \ref{Figure:dim_DMLvsROL}, we notice that when the classifier is MLP, the error of DML rises dramatically as {$p$} increases, while RCL{$_{2,1}$} performs more steadily. In addition, our RCL{$_{2,2}$} estimator overall has a smaller {$\epsilon_{ATE}$} than the DR estimator no matter how {$r_{c}$} or {$p$} varies.

\begin{table*}[t]
	\centering
	\caption{The performance comparisons ({$\epsilon_{ATE} \pm \sigma_{ATE}$}) on the test sets of 1000 \textbf{IHDP} experiments. Smaller {$\epsilon_{ATE}$} is better.}
	\resizebox{2\columnwidth}{!}{
		\begin{tabular}{cccc|cccccc}
			\toprule
			Model/Estimator & DR    & RCL$_{2,2}$ & $R_{DR}$ & IPW   & AIPW  & DML   & DML-trim & RCL$_{2,1}$ & $R_{DML}$ \\
			\midrule
			LASSO+LR & \num{0.0957016592574597}$\pm$\num{0.640267801173915} & \num{0.0922338093233415}$\pm$\num{0.577391472448973} & -3.6\% & \num{1.51697356015605}$\pm$\num{4.68598260312737} & \num{0.105763367167141}$\pm$\num{0.300194083971267} & \num{0.105763367167142}$\pm$\num{0.300194083971274} & \num{0.105763364379096}$\pm$\num{0.300194084194018} & \num{0.104851958331204}$\pm$\num{0.651543659112282} & -0.9\% \\
			\midrule
			LASSO+RF & \num{0.0957016592574597}$\pm$\num{0.640267801173915} & \num{0.0919418871606084}$\pm$\num{0.564172928180374} & -3.9\% & $\infty$ & \num{0.109315513248708}$\pm$\num{0.280292014664635} & $\infty$ & \num{0.11131590918084}$\pm$\num{0.291181180228278} & \num{0.10262849397032}$\pm$\num{0.630798484505913} & -6.1\% \\
			\midrule
			LASSO+MLP & \num{0.0957016592574597}$\pm$\num{0.640267801173915} & \num{0.0927439264967579}$\pm$\num{0.593285296206187} & -3.1\% & \num{4.4812490222709}$\pm$\num{9.85644139084187} & \num{0.224440190445999}$\pm$\num{0.439720169984648} & \num{0.224440190445998}$\pm$\num{0.439720169984648} & \num{0.19311183721892}$\pm$\num{0.323636374244496} & \num{0.102944659987956}$\pm$\num{0.590220193929016} & -47\% \\
			\midrule
			RF+LR & \num{0.0979732966141551}$\pm$\num{0.553484192555486} & \num{0.0932579681403953}$\pm$\num{0.439817975315819} & -4.8\% & \num{1.51697356015605}$\pm$\num{4.68598260312737} & \num{0.128811621589012}$\pm$\num{0.724585718194816} & \num{0.128811621589012}$\pm$\num{0.724585718194816} & \num{0.128811615676305}$\pm$\num{0.724585718507295} & \num{0.128842821124574}$\pm$\num{1.07488187681154} & 0.0\% \\
			\midrule
			RF+RF & \num{0.0979732966141551}$\pm$\num{0.553484192555486} & \num{0.0942932571289985}$\pm$\num{0.458204054417463} & -3.8\% & $\infty$ & \num{0.126178712647616}$\pm$\num{0.638434948404866} & $\infty$ & \num{0.128012877019247}$\pm$\num{0.641268017780346} & \num{0.124621883086877}$\pm$\num{1.01221269744655} & -1.2\% \\
			\midrule
			RF+MLP & \num{0.0979732966141551}$\pm$\num{0.553484192555486} & \num{0.0950802814635367}$\pm$\num{0.506860794218963} & -3.0\% & \num{4.4812490222709}$\pm$\num{9.85644139084187} & \num{0.240480141381655}$\pm$\num{0.756511151700561} & \num{0.240480141381655}$\pm$\num{0.756511151700565} & \num{0.211482597532412}$\pm$\num{0.714411783365695} & \num{0.120590099634166}$\pm$\num{0.883571527750748} & -43\% \\
			\midrule
			MLP+LR & \num{0.0803597106682557}$\pm$\num{0.229579144869116} & \num{0.0790235103312214}$\pm$\num{0.229472621327462} & -1.7\% & \num{1.51697356015605}$\pm$\num{4.68598260312737} & \num{0.141389283589274}$\pm$\num{0.242181566583048} & \num{0.141389283589274}$\pm$\num{0.242181566583048} & \num{0.14138927504712}$\pm$\num{0.242181564569142} & \num{0.104578167591645}$\pm$\num{0.194193628272569} & -26\% \\
			\midrule
			MLP+RF & \num{0.0803597106682557}$\pm$\num{0.229579144869116} & \num{0.0792146226254808}$\pm$\num{0.23321971636867} & -1.4\% & $\infty$ & \num{0.140930737288581}$\pm$\num{0.244936557918274} & $\infty$ & \num{0.142145985158482}$\pm$\num{0.253716565669282} & \num{0.104382624931009}$\pm$\num{0.192382428814327} & -26\% \\
			\midrule
			MLP+MLP & \num{0.0803597106682557}$\pm$\num{0.229579144869116} & \num{0.0790382014019174}$\pm$\num{0.230289370683907} & -1.6\% & \num{4.4812490222709}$\pm$\num{9.85644139084187} & \num{0.388796191412648}$\pm$\num{1.11801289407659} & \num{0.388796144909208}$\pm$\num{1.11801240474788} & \num{0.340278452668023}$\pm$\num{0.951125010966888} & \num{0.111926344793105}$\pm$\num{0.235810638754905} & -67\% \\
			\midrule
			TARNet & \num{0.0542958025852373}$\pm$\num{0.0942507273076831} & \num{0.0533817393804141}$\pm$\num{0.0918769235152564} & -1.7\% & \num{1.27631249863221}$\pm$\num{4.1830027255246} & \num{0.0888336751009626}$\pm$\num{0.16250044416974} & \num{0.0888336437847067}$\pm$\num{0.162499466513899} & \num{0.0888336437847067}$\pm$\num{0.162499466513899} & \num{0.0827653474603322}$\pm$\num{0.197713461439485} & -6.8\% \\
			\midrule
			Dragonnet & \num{0.0562447309384025}$\pm$\num{0.092229254884046} & \num{0.0557504174151612}$\pm$\num{0.0952493269346849} & -0.9\% & \num{1.71597732398186}$\pm$\num{4.81418695573581} & \num{0.14127353877066}$\pm$\num{0.218639997523957} & \num{0.141273601602494}$\pm$\num{0.218640208253127} & \num{0.134702721565834}$\pm$\num{0.175778675670746} & \num{0.0816722418781141}$\pm$\num{0.105565159717243} & -39\% \\
			\bottomrule
		\end{tabular}%
	}
	\label{Table:ihdp}%
\end{table*}%

\begin{table*}[t]
	\centering
	\caption{The performance comparisons ({$\epsilon_{ATE} \pm \sigma_{ATE}$}) on the test sets of 100 \textbf{Twins} experiments. Smaller {$\epsilon_{ATE}$} is better.}
	\resizebox{2\columnwidth}{!}{
		\begin{tabular}{cccc|cccccc}
			\toprule
			Model/Estimator & DR    & RCL$_{2,2}$ & $R_{DR}$ & IPW   & AIPW  & DML   & DML-trim & RCL$_{2,1}$ & $R_{DML}$ \\
			\midrule
			LASSO+LR & \num{0.667161417975565}$\pm$\num{0.436943790888348} & \num{0.646031944788284}$\pm$\num{0.357810609635846} & -3.2\% & \num{0.997186654546707}$\pm$\num{1.0282479839264} & \num{0.863413170171111}$\pm$\num{0.867510368220328} & \num{0.863413170171112}$\pm$\num{0.867510368220329} & \num{0.863413170171112}$\pm$\num{0.867510368220329} & \num{0.860539637381624}$\pm$\num{0.861111151796158} & -0.3\% \\
			\midrule
			LASSO+RF & \num{0.667161417975565}$\pm$\num{0.436943790888348} & \num{0.640423458872207}$\pm$\num{0.346180238288157} & -4.0\% & \num{1.06774689631041}$\pm$\num{1.06667674961721} & \num{0.912542741177314}$\pm$\num{1.0324487837971} & \num{0.912542741177313}$\pm$\num{1.03244878379709} & \num{0.912541530559238}$\pm$\num{1.03244888082876} & \num{0.850930835731041}$\pm$\num{0.857651696249796} & -6.8\% \\
			\midrule
			LASSO+MLP & \num{0.667161417975565}$\pm$\num{0.436943790888348} & \num{0.652263208887379}$\pm$\num{0.361908866758876} & -2.2\% & \num{2.71695259473322}$\pm$\num{2.53524699605525} & \num{0.941193658246498}$\pm$\num{0.969680994312327} & \num{0.941193658246499}$\pm$\num{0.969680994312327} & \num{0.924807382693669}$\pm$\num{0.950933057524706} & \num{0.883284524889292}$\pm$\num{0.851073779438786} & -4.5\% \\
			\midrule
			RF+LR & \num{0.604186638495974}$\pm$\num{0.533120559180304} & \num{0.576393675529002}$\pm$\num{0.460608409580317} & -4.6\% & \num{0.997186654546707}$\pm$\num{1.0282479839264} & \num{0.782618974579244}$\pm$\num{0.863295302774404} & \num{0.782618974579244}$\pm$\num{0.863295302774404} & \num{0.782618974579244}$\pm$\num{0.863295302774404} & \num{0.774451947731414}$\pm$\num{0.842482685707567} & -1.0\% \\
			\midrule
			RF+RF & \num{0.604186638495974}$\pm$\num{0.533120559180304} & \num{0.573526624706884}$\pm$\num{0.444114614484407} & -5.1\% & \num{1.06774689631041}$\pm$\num{1.06667674961721} & \num{0.898143875599452}$\pm$\num{1.04811245118099} & \num{0.898143875599452}$\pm$\num{1.04811245118099} & \num{0.898142228357805}$\pm$\num{1.04811284592583} & \num{0.808639013083756}$\pm$\num{0.860979316949181} & -10\% \\
			\midrule
			RF+MLP & \num{0.604186638495974}$\pm$\num{0.533120559180304} & \num{0.582058275375756}$\pm$\num{0.467041836393486} & -3.7\% & \num{2.71695259473322}$\pm$\num{2.53524699605525} & \num{0.878704531177447}$\pm$\num{0.939717406064745} & \num{0.878704531177446}$\pm$\num{0.939717406064746} & \num{0.863391078384253}$\pm$\num{0.925297128540437} & \num{0.816129289317978}$\pm$\num{0.828611304610332} & -5.5\% \\
			\midrule
			MLP+LR & \num{0.660213653748403}$\pm$\num{0.643159826971596} & \num{0.624421249388836}$\pm$\num{0.561873977426024} & -5.4\% & \num{0.997186654546707}$\pm$\num{1.0282479839264} & \num{0.822347815959208}$\pm$\num{0.831239772507} & \num{0.822347815959207}$\pm$\num{0.831239772507} & \num{0.822347815959207}$\pm$\num{0.831239772507} & \num{0.817012134151291}$\pm$\num{0.827875248022536} & -0.6\% \\
			\midrule
			MLP+RF & \num{0.660213653748403}$\pm$\num{0.643159826971596} & \num{0.618517938691176}$\pm$\num{0.548179257855073} & -6.3\% & \num{1.06774689631041}$\pm$\num{1.06667674961721} & \num{0.939531974914234}$\pm$\num{1.00613870291003} & \num{0.939531974914234}$\pm$\num{1.00613870291003} & \num{0.939531328888188}$\pm$\num{1.00613877340099} & \num{0.84527502195103}$\pm$\num{0.827317719691973} & -10\% \\
			\midrule
			MLP+MLP & \num{0.660213653748403}$\pm$\num{0.643159826971596} & \num{0.629679914590112}$\pm$\num{0.562780660147385} & -4.6\% & \num{2.71695259473322}$\pm$\num{2.53524699605525} & \num{0.904776867847457}$\pm$\num{0.954993073594451} & \num{0.904776832924196}$\pm$\num{0.95499336557208} & \num{0.899217557920781}$\pm$\num{0.953636899261841} & \num{0.849724793635334}$\pm$\num{0.833052100933173} & -5.5\% \\
			\midrule
			TARNet & \num{0.656097905805353}$\pm$\num{0.600217082337399} & \num{0.621069087185189}$\pm$\num{0.509983143308584} & -5.3\% & \num{2.46916252879544}$\pm$\num{2.98410306499364} & \num{0.937674888447282}$\pm$\num{1.30252582133794} & \num{0.937675061452057}$\pm$\num{1.30252599182589} & \num{0.937675061452057}$\pm$\num{1.30252599182589} & \num{0.864644737634207}$\pm$\num{1.04331803418323} & -7.8\% \\
			\midrule
			Dragonnet & \num{0.677258215220963}$\pm$\num{0.634940870296755} & \num{0.642171326254504}$\pm$\num{0.561434959073984} & -5.2\% & \num{1.66850913554347}$\pm$\num{1.65120065182147} & \num{0.794897423744431}$\pm$\num{0.740918933191085} & \num{0.794897437630666}$\pm$\num{0.740918949177126} & \num{0.794897437630666}$\pm$\num{0.740918949177126} & \num{0.79024433520866}$\pm$\num{0.773968125856086} & -0.6\% \\
			\bottomrule
		\end{tabular}%
	}
	\label{Table:twins}%
\end{table*}%

\subsection{Numerical Studies on Benchmark Datasets}\label{sec:Numerical Studies on Benchmark Datasets}

\textbf{Models.} Similar to the simulation experiments, we choose Lasso, RF, and MLP as the regressors and choose LR, RF, and MLP as the classifiers. Additionally, two prevalent neural network models, TARNet and Dragonnet, are also considered for learning the nuisance parameters. According to \cite{shi2019adapting}, these two neural network structures can incorporate the estimations of both {$g^{i}$} and {$\pi^{i}$} using the representation learning technique.

\textbf{Settings.} We implement the above methods on two widely adopted benchmark datasets for causal inference, i.e., \textbf{IHDP} and \textbf{Twins}, and then compare RCL estimators with DR, IPW, DML, and their variants AIPW and DML-trim estimators. Mathematically, both the AIPW estimator and the DML-trim estimator are the same as the DML estimator. However, empirically, AIPW and DML-trim are less prone to suffer from the extreme values of inverse propensity scores. To be precise, AIPW decomposes the estimator into two parts that both contain the IPW term (see \cite{linden2016estimating}), while DML-trim trims estimated propensity scores at the cutoff points of $0.01$ and $0.99$ (see \cite{chernozhukov2018double}).

We take RCL{$_{2,1}$} and RCL{$_{2,2}$} as the representatives of the general RCL estimators because for the real datasets with a relatively small sample size and a large dimension of features, the second moment estimation of {$\nu^i$} is more reliable compared to higher moment estimations. 
We use grid search to adjust the hyperparameters on the validation set for those machine learning models. For TARNet and Dragonnet, we use the same network structures (layers, units, regularization, batch size, learning rate, and stopping criterion) as suggested in \cite{shalit2017estimating} and \cite{shi2019adapting}.

\textbf{IHDP.} It is a widely used benchmark dataset for causal inference introduced by \cite{hill2011bayesian}. IHDP dataset is constructed based on the randomized controlled experiment conducted by Infant Health and Development Program. The collected 25-dimensional confounders from the 747 samples are associated with the properties of infants and their mothers, such as birth weight and mother's age. Our aim is to study the treatment effect of the specialist visits (binary treatment) on the cognitive scores (continuous-valued outcome). By removing a subset of the treated group, the selection bias in the IHDP dataset occurs. 
There are 1000 IHDP datasets given in \cite{hill2011bayesian}. Each dataset is split by the ratio of {$63\%/27\%/10\%$} as training/validation/test sets, which keeps consistent with \cite{shalit2017estimating}.

\textbf{Twins.} Twins dataset is introduced by \cite{louizos2017causal} and it collects twin births in USA between 1989 and 1991. The treatment {$D=1$} indicates the heavier twin while {$D=0$} indicates the lighter one; the outcome {$Y$} is a binary variable defined as the mortality in the first year; the covariates {$\mathbf{Z}$} include 30 features relevant to the parents, the pregnancy, and the birth. Similar to \cite{yoon2018ganite}, we only select twins that have the same gender and both weights less than 2kg. Finally, we have {$11440$} pairs of twins whose mortality rates are {$17.7\%$} for lighter twin and {$16.1\%$} for heavier twin. To simulate an observational dataset with selection bias, we selectively choose one from the twins as the observed sample based on the covariates of the $m^{\text{th}}$ individual: {$D_m|\mathbf{Z}_m \sim$} Bernoulli(Sigmoid({$\mathbf{w}^T\mathbf{Z}_m+b$})), where {$\mathbf{w}^T \sim \mathcal{U}_{30}((-0.01,0.01)^{30})$} and {$b \sim \mathcal{N}(0,0.01)$}. We repeat this process {$100$} times, and each of the generated {$100$} datasets is split by the ratio of {$64\%/16\%/20\%$} as training/validation/test sets, which keeps consistent with \cite{yoon2018ganite}.

\textbf{Analysis.} In Table \ref{Table:ihdp} and Table \ref{Table:twins}, we report the performance of every model combination, measured by {$\epsilon_{ATE} \; (\pm \sigma_{ATE})$}, for IHDP and Twins experiments respectively. The smaller {$\epsilon_{ATE}$}, the better. The metric {$R_{DR}=\text{RCL}_{2,2}/\text{DR}-1$}  is used to evaluate the reduction ratio in {$\epsilon_{ATE}$} of RCL{$_{2,2}$} relative to DR, and {$R_{DML}=\text{RCL}_{2,1}/\min(\text{IPW, AIPW, DML, DML-trim})-1$} is to evaluate the reduction ratio in RCL{$_{2,1}$} relative to the best estimator among IPW, AIPW, DML, and DML-trim. The negative {$R_{DR}$} ({$R_{DML}$}) indicates that the RCL estimator has a smaller {$\epsilon_{ATE}$} than the DR (IPW, AIPW, DML, and DML-trim) estimator. The smaller {$R_{DR}$} ({$R_{DML}$}), the better for our estimators.

Table \ref{Table:ihdp} reports the experimental results on IHDP datasets. It illustrates that although the DR estimator produces reasonable estimates, the RCL{$_{2,2}$} estimator has a smaller {$\epsilon_{ATE}$} than the DR estimator, with the error reduced relatively by {$0.9\%-4.8\%$}. Simultaneously, RCL{$_{2,2}$} achieves the best performance among all the estimators across all the model combinations. We also notice that even though AIPW and DML-trim avoid extreme values encountered by DML (e.g., when the classifier is chosen as RF, the inverse propensity score is estimated with an infinity value for some data points), the RCL{$_{2,1}$} estimator is still at most $67\%$ better than the best of IPW, AIPW, DML, and DML-trim estimators. More importantly, when the variance of inverse propensity scores is large (e.g., when the classifier is MLP), the improvement of RCL{$_{2,1}$} relative to DML becomes more substantial.

Table \ref{Table:twins} presents the experimental results on Twins datasets. It can be observed that the RCL{$_{2,2}$} estimator has a significantly smaller {$\epsilon_{ATE}$} compared with other estimators for all model combinations, and it can reduce the estimation error relatively by {$2.2\%-6.3\%$} compared with the DR method. Besides, the RCL{$_{2,1}$} estimator can reduce the estimation error by {$0.3\%-10\%$} relative to the best of IPW, AIPW, DML, and DML-trim estimators. It is also noticeable that when {$\pi^{i}$} is well specified (e.g., the case on using Dragonnet in Table \ref{Table:twins}), our RCL{$_{2,1}$} estimator still outperforms the DML estimator even though the error {$\epsilon_{ATE}$} produced by the DML estimator is small enough.

\begin{table}[t]
	\centering
	\caption{The comparisons of mean and standard deviation between the generated (fake) data and the original (real) data.}
	\begin{tabular}{ccccc}
		\toprule
		\multirow{3}[6]{*}{Variables} & \multicolumn{4}{c}{\textbf{Mean}} \\
		\cmidrule{2-5}          & \multicolumn{2}{c}{Controlled} & \multicolumn{2}{c}{Treated} \\
		\cmidrule{2-5}          & Fake  & Real  & Fake  & Real \\
		\midrule
		Age   & 31.02 & 30.99 & 30.54 & 30.00 \\
		Using days  & 950.10 & 921.28 & 874.61 & 875.62 \\
		Status & 0.05  & 0.05  & 0.05  & 0.04 \\
		Credit line $(\times 100)$ & 93.05 & 91.24 & 94.34 & 93.56 \\
		Order amount $(\times 100)$ & 11.53 & 10.64 & 9.58  & 10.77 \\
		Borrow amount $(\times 100)$ & 8.58  & 9.01  & 13.26 & 13.69 \\
		Repayment amount $(\times 100)$ & 6.83  & 6.95  & 12.69 & 11.37 \\
		Numer of orders  & 3.82  & 3.87  & 4.09  & 4.55 \\
		Numer of borrowings & 1.96  & 1.99  & 2.94  & 3.00 \\
		Numer of repayments & 0.97  & 0.94  & 1.13  & 1.11 \\
		Handling fee & 15.87 & 16.47 & 14.09 & 16.29 \\
		Loan term & 2.61  & 2.58  & 2.22  & 2.20 \\
		Monthly consumption & 4.19  & 3.63  & 4.78  & 3.98 \\
		\midrule
		\multirow{3}[6]{*}{Variables} & \multicolumn{4}{c}{\textbf{Standard Deviation}} \\
		\cmidrule{2-5}          & \multicolumn{2}{c}{Controlled} & \multicolumn{2}{c}{Treated} \\
		\cmidrule{2-5}          & Fake  & Real  & Fake  & Real \\
		\midrule
		Age   & 7.09  & 6.68  & 7.41  & 6.92 \\
		Using days  & 324.99 & 310.12 & 291.13 & 292.99 \\
		Status & 0.23  & 0.22  & 0.21  & 0.20 \\
		Credit line $(\times 100)$ & 41.13 & 39.10 & 36.78 & 38.52 \\
		Order amount $(\times 100)$ & 15.97 & 18.48 & 12.85 & 15.86 \\
		Borrow amount $(\times 100)$ & 15.86 & 15.51 & 18.83 & 21.88 \\
		Repayment amount $(\times 100)$ & 13.93 & 12.44 & 18.49 & 19.20 \\
		Numer of orders  & 3.51  & 3.37  & 2.85  & 4.46 \\
		Numer of borrowings & 2.38  & 2.22  & 3.35  & 4.33 \\
		Numer of repayments & 0.51  & 0.58  & 0.53  & 0.68 \\
		Handling fee & 63.26 & 67.60 & 51.66 & 59.90 \\
		Loan term & 3.76  & 3.61  & 2.82  & 2.98 \\
		Monthly consumption & 2.68  & 2.65  & 2.9   & 2.73 \\
		\bottomrule
	\end{tabular}%
	\label{tab:comparison_credit}%
\end{table}%

\begin{table*}[t]
	\centering
	\caption{The performance comparisons ({$\epsilon_{ATE} \pm \sigma_{ATE}$}) on the test sets of 100 \textbf{Credit} experiments. Smaller {$\epsilon_{ATE}$} is better.}
	\resizebox{2\columnwidth}{!}{
	\begin{tabular}{cccc|cccccc}
		\toprule
		Model/Estimator & DR    & RCL$_{2,2}$ & $R_{DR}$ & IPW   & AIPW  & DML   & DML-trim & RCL$_{2,1}$ & $R_{DML}$ \\
		\midrule
		LASSO+LR & \num{0.690049682648229}$\pm$\num{1.74286942570469} & \num{0.680621014767257}$\pm$\num{1.71390836906064} & -1.4\% & \num{2.40049493034954}$\pm$\num{2.43906039404566} & \num{1.06582155320181}$\pm$\num{1.35433356217323} & \num{1.06582155320181}$\pm$\num{1.35433356217323} & \num{0.981344691098694}$\pm$\num{1.11035516976907} & \num{0.644504319600636}$\pm$\num{0.742954972954954} & -34.3\% \\
		\midrule
		LASSO+RF & \num{0.690049682648229}$\pm$\num{1.74286942570469} & \num{0.677515449676811}$\pm$\num{1.70959815758643} & -1.8\% & $\infty$ & \num{1.10207031590086}$\pm$\num{1.53817075764853} & $\infty$ & \num{1.09388472155565}$\pm$\num{1.51711131131176} & \num{0.656311328830662}$\pm$\num{0.771858804552922} & -40.0\% \\
		\midrule
		LASSO+MLP & \num{0.690049682648229}$\pm$\num{1.74286942570469} & \num{0.681864438093791}$\pm$\num{1.71273737358402} & -1.2\% & \num{2.95960275392154}$\pm$\num{3.33831524399723} & \num{0.758444112456858}$\pm$\num{1.07656628542571} & \num{0.758444112456858}$\pm$\num{1.07656628542571} & \num{0.758442275558753}$\pm$\num{1.07656116355827} & \num{0.627041651277892}$\pm$\num{0.74385615774205} & -17\% \\
		\midrule
		RF+LR & \num{0.696297961691107}$\pm$\num{1.57368830898018} & \num{0.687614131424038}$\pm$\num{1.55188911864512} & -1.2\% & \num{2.40049493034954}$\pm$\num{2.43906039404566} & \num{0.924612372641937}$\pm$\num{1.02095913566655} & \num{0.924612372641936}$\pm$\num{1.02095913566655} & \num{0.867388079898693}$\pm$\num{0.912322979977973} & \num{0.61057535360992}$\pm$\num{0.564389362000155} & -29.6\% \\
		\midrule
		RF+RF & \num{0.696297961691107}$\pm$\num{1.57368830898018} & \num{0.684494509499265}$\pm$\num{1.55123229409802} & -1.7\% & $\infty$ & \num{0.975900144259669}$\pm$\num{1.02256678006144} & $\infty$ & \num{0.971531420886165}$\pm$\num{1.00187788492556} & \num{0.618625253765665}$\pm$\num{0.541004187985663} & -36.3\% \\
		\midrule
		RF+MLP & \num{0.696297961691107}$\pm$\num{1.57368830898018} & \num{0.688845154528303}$\pm$\num{1.55268489033691} & -1.1\% & \num{2.95960275392154}$\pm$\num{3.33831524399723} & \num{0.719553453941674}$\pm$\num{0.902552630701806} & \num{0.719553453941674}$\pm$\num{0.902552630701805} & \num{0.719551776939198}$\pm$\num{0.902550823715124} & \num{0.588533929086706}$\pm$\num{0.540938097571692} & -18\% \\
		\midrule
		MLP+LR & \num{1.08982756220863}$\pm$\num{1.52194526688617} & \num{1.07738875264661}$\pm$\num{1.50712340006535} & -1.1\% & \num{2.40049493034954}$\pm$\num{2.43906039404566} & \num{1.02478540400446}$\pm$\num{1.27741600501819} & \num{1.02478540400446}$\pm$\num{1.27741600501819} & \num{0.954183808635399}$\pm$\num{1.00662340985542} & \num{0.701366944808164}$\pm$\num{0.685348188169263} & -26\% \\
		\midrule
		MLP+RF & \num{1.08982756220863}$\pm$\num{1.52194526688617} & \num{1.07445253896869}$\pm$\num{1.50182501980458} & -1.4\% & $\infty$ & \num{1.04344713492224}$\pm$\num{1.11876673246206} & $\infty$ & \num{1.03148248845676}$\pm$\num{1.11141079234105} & \num{0.715433125507879}$\pm$\num{0.649140861372895} & -31\% \\
		\midrule
		MLP+MLP & \num{1.08982756220863}$\pm$\num{1.52194526688617} & \num{1.07686100378118}$\pm$\num{1.50557050426281} & -1.2\% & \num{2.95960275392154}$\pm$\num{3.33831524399723} & \num{0.725530731034892}$\pm$\num{0.911923302583978} & \num{0.725530635877883}$\pm$\num{0.9119222830062} & \num{0.725531463221722}$\pm$\num{0.911922898190708} & \num{0.660601946230215}$\pm$\num{0.620231002844582} & -9\% \\
		\midrule
		TARNet & \num{0.726858779195232}$\pm$\num{0.952506228476958} & \num{0.716647907722652}$\pm$\num{0.938860651315584} & -1.4\% & $\infty$ & $\infty$ & $\infty$ & \num{13.3481589508833}$\pm$\num{17.3186060153251} & \num{0.73443354388927}$\pm$\num{0.855793299810166} & -94.5\% \\
		\midrule
		Dragonnet & \num{0.740349980069469}$\pm$\num{1.04028015334969} & \num{0.728776295887394}$\pm$\num{1.01955782175376} & -1.6\% & \num{53.017501038504}$\pm$\num{389.318542500602} & \num{76.5859063068816}$\pm$\num{512.821144459242} & \num{76.5859134813654}$\pm$\num{512.821195315032} & \num{1.89449626198146}$\pm$\num{3.41067161924036} & \num{0.798366903867514}$\pm$\num{0.748310321240025} & -58\% \\
		\bottomrule
	\end{tabular}%
}
	\label{tab:credit_result}%
\end{table*}%

\subsection{Numerical Studies on Credit dataset}

Causal inference methods are popularly assessed with simulation datasets which are typically generated by a parametric data generating process. Though the ground truth of treatment effects is accessible in this way, such synthetic datasets fail to resemble the original real datasets. Some benchmark real datasets are used for evaluating methods by hiding some part of the full dataset with a selecting formula for some specific purpose, like IHDP and Twins. The resulting  semi-synthetic datasets are used in many existing works. But such original datasets are difficult and expensive to generate. Recently, \cite{athey2021using}  suggest using Wasserstein Generative Adversarial Network (WGAN) to generate datasets to evaluate various treatment effect estimators. Such artificial data can closely mimic the original real-world data, and the non-parametric generating way provides a fairer evaluation to alleviate concerns of researchers choosing some specific parametric generating processes to support the proposed methods.

\textbf{Credit.} Following the data generating strategy in \cite{athey2021using}, we generate the Credit dataset based on a consumer loan dataset collected from an online consumer credit lending platform. 
In the Credit dataset, the treatment is whether to increase the credit limit for an individual in June or not, and the outcome is the monthly consumption in June. The covariates of the Credit dataset are age, using days, credit status in the last month; and the average credit line, order amount, borrowing amount, repayment amount, number of orders, number of borrowings, number of repayments, handling fee, and loan term in the last three months. We randomly shuffle the samples to obtain $100$ different Credit datasets with $2000$ samples generated by WGAN, and then split by the ratio of $64\%/16\%/20\%$ as training/validation/test sets. The comparisons between the generated data and the original data are shown in Table \ref{tab:comparison_credit}.

\textbf{Result analysis.} We keep the same setting as in the IHDP experiments and report results on test sets in Table \ref{tab:credit_result}. It is obvious that RCL$_{2,2}$ and RCL$_{2,1}$ are overwhelmingly superior compared with DR and variants of DML, respectively. The improvements are especially pronounced for DML variants, with the estimation error reduced by $94.5\%$ at most. Moreover, our estimators protect the estimates from high variance or infinity values caused by the error-compounding issue. Table \ref{tab:credit_result} reveals that our estimators can significantly improve the ATE estimations compared to traditional estimators on datasets generated in a nonparametric way suggested by \cite{athey2021using}.

In summary, DML is recognized as a better method than DR and IPW because when the DR (IPW) estimator has a notable bias due to the misspecification on {$g^{i}$} (or {$\pi^{i}$}), the DML estimator can reduce the bias if {$\pi^{i}$} (or {$g^{i}$}) is well estimated. However, the advantages of DML are not easy to achieve in practice. First, the DML estimator, which incorporates the inverse propensity score term, may give a very large estimation or even infinite value of ATE, reflecting that the DML estimator is volatile to the estimation error of propensity scores. Second, if {$g^{i}$} is approximated well enough, DML will not assuredly perform better than DR due to the high variance of the IPW term. By contrast, our RCL estimators are more practical since i) they can stabilize the error caused by the extremes of propensity scores; ii) if {$g^{i}$} is well approximated, the RCL{$_{2,2}$} estimator will outperform the DR estimator owing to the RCL scores are orthogonal scores; iii) if {$g^{i}$} is not well approximated, but {$\pi^{i}$} is correctly specified, the RCL estimator with {$k=1$} performs better than the DML estimator with smaller estimation error and slighter volatility to the estimation error of propensity scores.

\section{Conclusions}
This paper constructs the RCL scores and establishes the RCL estimators for the ATE estimation. Theoretically, we prove that the RCL scores are orthogonal scores and the RCL estimators are consistent. Numerically, the comprehensive experiments have shown that our estimators outperform the commonly used methods such as DR, IPW, AIPW, DML, and DML-trim estimators. In addition, the proposed RCL estimators have the same merit, i.e., the doubly robust property, as the DML estimator. However, unlike the DML estimator, the RCL estimators are more stable to the extremes of propensity scores than the DML estimator and its variants. 

To be more specific, RCL{$_{2,2}$} always outperforms DR and RCL{$_{2,1}$} always works better than the IPW-based and DML-based methods. The comparison result between RCL{$_{2,2}$} and RCL{$_{2,1}$} varies across datasets and different combinations of machine learning regressor and classifier. Notice that there is also no consistent superiority between DR and DML (it may depend on the severity of the confounding effect or the difference between the distributions of the treated group and controlled group). We suggest choosing RCL{$_{2,2}$} or RCL{$_{2,1}$} based on the performance difference between DR and DML. For RCL{$_{r,k}$} with {$r>2$}, in the current stage it is not superior than the case with {$r=2$}.
In the future research, we will investigate the optimal values of {$(r,k)$} for the RCL estimator.

Future research also includes providing interpretability for deep learning models in causal inference context using the RCL method, since deep learning has become the most attractive methodology and technique in the big data era and the interpretability is critical for researchers. In addition, the applications of RCL method in  decision makings are worthwhile future works, because better causality models can answer well more {\textit{what if}} problems and thus are inherently linked and helpful to decision makings.

\appendices
\section{Proof of Theorem \ref{thm:$k^{th}$-order orthogonal condition score function}}
\label{appendix:A}

Given the nuisance parameters $\varrho=(\mathcal{g}^{i},a_{i})$ and the true nuisance parameters $\rho=(g^{i},\pi^{i})$, we find out the RCL score $\psi^{i}(W,\vartheta,\varrho)$ w.r.t. the nuisance parameters $\varrho=(\mathcal{g}^{i},a_{i})$ which can be used to construct the estimator of the causal parameter $\theta^{i}:=\mathbb{E}\left[g^{i}(\mathbf{Z})\right]$. We try an ansatz of $\psi^{i}(W,\vartheta,\varrho)$ such that
\begin{equation}
{
\begin{aligned}\label{eqt:higher-order orthogonal condition score function-supp}
\psi^{i}(W,\vartheta,\varrho)&=\vartheta-\mathcal{g}^{i}(\mathbf{Z})-(Y^{i}-\mathcal{g}^{i}(\mathbf{Z}))A(D,\mathbf{Z};a_{i}),
\end{aligned}
}
\end{equation}
where 
\begin{equation}
{
\begin{aligned}\label{eqt:higher-order orthogonal condition score function 2-supp}
A(D,\mathbf{Z};a_{i})&=\bar{b}_{r}\left[\mathbf{1}_{\{D=d^{i}\}}-a_{i}(\mathbf{Z})\right]^{r}\\
&+\underset{q=1}{\overset{k-1}{\sum}}b_{q}\big(\big[\mathbf{1}_{\{D=d^{i}\}}-a_{i}(\mathbf{Z})\big]^{q}-\mathbb{E}\big[(\nu^{i})^{q}\mid\mathbf{Z}\big]\big).
\end{aligned}
}
\end{equation}
Here, the coefficients $b_{1},\dots,b_{k-1},\bar{b}_{r}$ depend on $\mathbf{Z}$ and the moments of $\nu^{i}$ only. Using the ansatz, we notice that $\psi^{i}(W,\vartheta,\varrho)$ satisfies the moment condition, i.e., $\mathbb{E}\left[\psi^{i}(W,\vartheta,\varrho)\mid_{\vartheta=\theta^{i},\;\varrho=\rho}\right]=0$. Indeed, we have
\begin{equation*}
{
\begin{aligned}
&\mathbb{E}\left[\psi^{i}(W,\vartheta,\varrho)\mid_{\vartheta=\theta^{i},\;\varrho=\rho}\right]\\
=&\mathbb{E}\left[\theta^{i}-g^{i}(\mathbf{Z})-(Y^{i}-g^{i}(\mathbf{Z}))A(D,\mathbf{Z};\pi^{i})\right]\\
=&\mathbb{E}\left[\theta^{i}-g^{i}(\mathbf{Z})\right]-\mathbb{E}\left[(Y^{i}-g^{i}(\mathbf{Z}))A(D,\mathbf{Z};\pi^{i})\right]\\
=&-\mathbb{E}\left[\xi^{i}\times A(D,\mathbf{Z};\pi^{i})\right]\\
=&-\mathbb{E}\left[\mathbb{E}\left[\xi^{i}\times A(D,\mathbf{Z};\pi^{i})\mid D, \mathbf{Z}\right]\right]\\
=&-\mathbb{E}\left[A(D,\mathbf{Z};\pi^{i})\mathbb{E}\left[\xi^{i}\mid D,\mathbf{Z}\right]\right]=0.\\
\end{aligned}
}
\end{equation*}
The second last equality comes from the fact that $A(D,\mathbf{Z};\pi^{i})$ is a function of $(D,\mathbf{Z})$. The last equality comes from the fact that $(\xi^{i} \perp \!\!\! \perp D) \mid \mathbf{Z}$. Now, we aim to find out the coefficients $b_{1},\dots,b_{k-1},\bar{b}_{r}$ such that the score \eqref{eqt:higher-order orthogonal condition score function-supp} satisfies the orthogonal score condition in Definition \ref{def:orthogonal condition} for $\alpha\in S_k$ (abbreviated as a $k^{\mathrm{th}}$ score). Indeed, we need to have $\mathbb{E}\left[\partial_{\mathcal{g}^{i}}^{\alpha_{1}}\partial_{a_{i}}^{\alpha_{2}}\psi^{i}(W,\vartheta,\varrho)\mid_{\vartheta=\theta^{i},\;\varrho=\rho}\mid\mathbf{Z}\right]=0$ for all $\alpha_{1}$ and $\alpha_{2}$ which are non-negative integers such that $1\leq \alpha_{1}+\alpha_{2}\leq k$. Since $\partial_{\mathcal{g}^{i}}^{\alpha_{1}}\partial_{a_{i}}^{\alpha_{2}}\psi^{i}(W,\vartheta,\varrho)=0$ when $\alpha_{1}\geq 2$, we only need to solve the coefficients $b_{1},\dots,b_{k-1},\bar{b}_{r}$ from
\begin{subequations}
{
\begin{empheq}[left=\empheqlbrace]{align}
0&=\mathbb{E}\left[\partial_{a_{i}}^{k}\psi^{i}(W,\vartheta,\varrho)\mid_{\vartheta=\theta^{i},\;\varrho=\rho}\mid\mathbf{Z}\right],\label{eqt:score condition 1}\\
0&=\mathbb{E}\left[\partial_{\mathcal{g}^{i}}^{1}\partial_{a_{i}}^{q}\psi^{i}(W,\vartheta,\varrho)\mid_{\vartheta=\theta^{i},\;\varrho=\rho}\mid\mathbf{Z}\right],\label{eqt:score condition 2}
\end{empheq}
}\noindent
\end{subequations}
$\forall q=0,\dots,k-1$. However, \eqref{eqt:score condition 1} always holds since
\begin{equation*}
{
\begin{aligned}
&\mathbb{E}\left[\partial_{a_{i}}^{k}\psi^{i}(W,\vartheta,\varrho)\mid_{\vartheta=\theta^{i},\;\varrho=\rho}\mid\mathbf{Z}\right]\\
=&\mathbb{E}\left[(Y^{i}-g^{i}(\mathbf{Z}))\times\partial_{a_{i}}^{k}A(D,\mathbf{Z};a_{i})\mid_{a_{i}=\pi^{i}}\mid\mathbf{Z}\right]\\
=&\mathbb{E}\left[\mathbb{E}\left[(Y^{i}-g^{i}(\mathbf{Z}))\times\partial_{a_{i}}^{k}A(D,\mathbf{Z};a_{i})\mid_{a_{i}=\pi^{i}}\mid D,\mathbf{Z}\right] \mid \mathbf{Z}\right]\\										=&\mathbb{E}\left[\partial_{a_{i}}^{k}A(D,\mathbf{Z};a_{i})\mid_{a_{i}=\pi^{i}}\mathbb{E}\left[(Y^{i}-g^{i}(\mathbf{Z}))\mid D,\mathbf{Z}\right] \mid \mathbf{Z}\right]\\
=&\mathbb{E}\left[\partial_{a_{i}}^{k}A(D,\mathbf{Z};a_{i})\mid_{a_{i}=\pi^{i}}\mathbb{E}\left[\xi^{i}\mid D, \mathbf{Z}\right] \mid \mathbf{Z}\right]=0.
\end{aligned}
}
\end{equation*}
Consequently, we need to find out the coefficients $b_{1}$, $b_{2}$, $\dots$, $b_{k-1}$, $\bar{b}_{r}$ from
{
\begin{align}\tag{\ref{eqt:score condition 2}}
\mathbb{E}\left[\partial_{\mathcal{g}^{i}}^{1}\partial_{a_{i}}^{q}\psi^{i}(W,\vartheta,\varrho)\mid_{\vartheta=\theta^{i},\;\varrho=\rho}\mid\mathbf{Z}\right]&=0,
\end{align}
}\noindent
$\forall q=0,\dots,k-1$. From \eqref{eqt:score condition 2}, there are $k$ equations and we need to solve the $k$ unknowns $b_{1},\dots,b_{k-1},\bar{b}_{r}$ from the $k$ equations. Generally, the $k$ unknowns could be solved uniquely.

To start with, we compute $\partial_{\mathcal{g}^{i}}^{1}\partial_{a_{i}}^{q}\psi^{i}(W,\vartheta,\varrho)$ for $q=0,\dots,k-1$. Note that
\begin{equation*}
{
\begin{aligned}
\partial_{\mathcal{g}^{i}}^{1}\partial_{a_{i}}^{q}\psi^{i}(W,\vartheta,\varrho)&=-1+A(D,\mathbf{Z};a_{i})
\end{aligned}
}
\end{equation*}
when $q=0$ and
\begin{equation*}
{
\begin{aligned}
\partial_{\mathcal{g}^{i}}^{1}\partial_{a_{i}}^{q}\psi^{i}(W,\vartheta,\varrho)&=\bar{b}_{r}\frac{r!(-1)^{q}[\mathbf{1}_{\{D=d^{i}\}}-a_{i}(\mathbf{Z})]^{r-q}}{(r-q)!}\\
&\quad+\overset{k-1}{\underset{u=q}{\sum}}b_{u}\frac{u!(-1)^{q}[\mathbf{1}_{\{D=d^{i}\}}-a_{i}(\mathbf{Z})]^{u-q}}{(u-q)!}
\end{aligned}
}
\end{equation*}
when $1\leq q\leq k-1$. Consequently, we need to solve for $b_{1},\dots,b_{k-1}$ and $\bar{b}_{r}$ simultaneously from
\begin{subequations}
\begin{equation}
{
\begin{aligned}\label{eqt:simultaneous equation 1}
1&=\mathbb{E}\left[A(D,\mathbf{Z};\pi^{i})\mid\mathbf{Z}\right]
\end{aligned}
}
\end{equation}
and 
\begin{equation}
{
\begin{aligned}\label{eqt:simultaneous equation 2}
0&=\mathbb{E}\left[\bar{b}_{r}\frac{r!(-1)^{q}[\mathbf{1}_{\{D=d^{i}\}}-\pi^{i}(\mathbf{Z})]^{r-q}}{(r-q)!}\mid\mathbf{Z}\right]\\
&\quad+\mathbb{E}\left[\overset{k-1}{\underset{u=q}{\sum}}b_{u}\frac{u!(-1)^{q}[\mathbf{1}_{\{D=d^{i}\}}-\pi^{i}(\mathbf{Z})]^{u-q}}{(u-q)!}\mid\mathbf{Z}\right].
\end{aligned}
}
\end{equation}
\end{subequations}
From \eqref{eqt:simultaneous equation 1}, we have
\begin{equation}\tag{\ref{eqt:simultaneous equation 1}*}\label{eqt:simultaneous equation 1-simplified}
{
\begin{aligned}
1&=\bar{b}_{r}\mathbb{E}\left[\left(\mathbf{1}_{\{D=d^{i}\}}-\pi^{i}(\mathbf{Z})\right)^{r}\mid\mathbf{Z}\right]\\
&\quad+\underset{q=1}{\overset{k-1}{\sum}}b_{q}\mathbb{E}\left[\left(\mathbf{1}_{\{D=d^{i}\}}-\pi^{i}(\mathbf{Z})\right)^{q}\mid\mathbf{Z}\right]\\
&\quad-\underset{q=1}{\overset{k-1}{\sum}}b_{q}\mathbb{E}\left[\mathbb{E}\left[(\nu^{i})^{q}\mid\mathbf{Z}\right]\mid\mathbf{Z}\right].
\end{aligned}
}
\end{equation}
Since 
\begin{equation*}
\begin{aligned}
\mathbb{E}\left[\left(\mathbf{1}_{\{D=d^{i}\}}-\pi^{i}(\mathbf{Z})\right)^{q}\mid\mathbf{Z}\right]&=\mathbb{E}\left[(\nu^{i})^{q}\mid\mathbf{Z}\right]\quad\text{and}\\
\mathbb{E}\left[\mathbb{E}\left[(\nu^{i})^{q}\mid\mathbf{Z}\right]\mid\mathbf{Z}\right]&=\mathbb{E}\left[(\nu^{i})^{q}\mid\mathbf{Z}\right],
\end{aligned}
\end{equation*}
we understand that 
\begin{equation*}
\begin{aligned}
\mathbb{E}\left[\left(\mathbf{1}_{\{D=d^{i}\}}-\pi^{i}(\mathbf{Z})\right)^{q}\mid\mathbf{Z}\right]-\mathbb{E}\left[\mathbb{E}\left[(\nu^{i})^{q}\mid\mathbf{Z}\right]\mid\mathbf{Z}\right]=0.
\end{aligned}
\end{equation*}
As such, \eqref{eqt:simultaneous equation 1-simplified} can be reduced as
%
\begin{equation*}
{
\begin{aligned}
& \bar{b}_{r}\mathbb{E}\left[\left(\mathbf{1}_{\{D=d^{i}\}}-\pi^{i}(\mathbf{Z})\right)^{r}\mid\mathbf{Z}\right]=1\\
\Rightarrow\quad & \bar{b}_{r}\mathbb{E}\left[(\nu^{i})^{r}\mid\mathbf{Z}\right]=1.
\end{aligned}
}
\end{equation*}
Hence, we can solve for $\bar{b}_{r}$ such that
\begin{equation*}
\begin{aligned}
\bar{b}_{r}=\frac{1}{\mathbb{E}\left[(\nu^{i})^{r}\mid\mathbf{Z}\right]}.
\end{aligned}
\end{equation*}

It remains to find out $b_{1},\dots,b_{k-1}$ from \eqref{eqt:simultaneous equation 2}. Indeed, we can simplify \eqref{eqt:simultaneous equation 2} as
\begin{equation}
{
\begin{aligned}\label{eqt:iteration equation of finding coeff of b}
\bar{b}_{r}\mathbb{E}\left[\frac{r!(\nu^{i})^{r-q}}{(r-q)!}\mid\mathbf{Z}\right]+\overset{k-1}{\underset{u=q}{\sum}}b_{u}\mathbb{E}\left[\frac{u!(\nu^{i})^{u-q}}{(u-q)!}\mid\mathbf{Z}\right] &=0\\
\Rightarrow \bar{b}_{r}\binom{r}{q}\mathbb{E}\left[(\nu^{i})^{r-q}\mid\mathbf{Z}\right]+\overset{k-1}{\underset{u=q}{\sum}}b_{u}\binom{u}{q}\mathbb{E}\left[(\nu^{i})^{u-q}\mid\mathbf{Z}\right] &=0,
\end{aligned}
}
\end{equation}
$\forall 1\leq q\leq k-1$. Now, we solve $b_{1},\dots,b_{k-1}$. We start with finding out $b_{k-1}$, followed by $b_{k-2},\;b_{k-3},\dots,b_{1}$ iteratively. When $q=k-1$, \eqref{eqt:iteration equation of finding coeff of b} becomes
\begin{equation*}
{
\begin{aligned}
 0=&~\bar{b}_{r}\binom{r}{k-1}\mathbb{E}\left[(\nu^{i})^{r-k+1}\mid\mathbf{Z}\right]\\
& +b_{k-1}\binom{k-1}{k-1}\mathbb{E}\left[(\nu^{i})^{0}\mid\mathbf{Z}\right]\\
\Rightarrow & \quad b_{k-1}=-\bar{b}_{r}\binom{r}{k-1}\mathbb{E}\left[(\nu^{i})^{r-k+1}\mid\mathbf{Z}\right].
\end{aligned}
}
\end{equation*}
Now, when $q=k-2$, \eqref{eqt:iteration equation of finding coeff of b} becomes
\begin{equation*}
{
\begin{aligned}
&0=\bar{b}_{r}{r \choose k-2}\mathbb{E}\left[(\nu^{i})^{r-k+2}\mid\mathbf{Z}\right]\\
&+b_{k-1}{k-1 \choose k-2}\mathbb{E}\left[(\nu^{i})^{(k-1)-(k-2)}\mid\mathbf{Z}\right]
+b_{k-2}\mathbb{E}\left[(\nu^{i})^{0}\mid\mathbf{Z}\right]\\
&\Rightarrow b_{k-2}=-b_{k-1}{k-1 \choose k-2}\mathbb{E}\left[(\nu^{i})^{1}\mid\mathbf{Z}\right]\\
&\qquad\qquad\quad\!-\bar{b}_{r}{r \choose k-2}\mathbb{E}\left[(\nu^{i})^{r-k+2}\mid\mathbf{Z}\right].
\end{aligned}
}
\end{equation*}
Now, suppose $b_{q+1},\dots,b_{k-1}$ are known and we want to find out what $b_{q}$ is. We have to solve it from
\begin{equation*}
{
\begin{aligned}
0=&~b_{q}\mathbb{E}\left[(\nu^{i})^{0}\mid\mathbf{Z}\right]+\bar{b}_{r}{r \choose q}\mathbb{E}\left[(\nu^{i})^{r-q}\mid\mathbf{Z}\right]\\
&+\overset{k-1}{\underset{u=q+1}{\sum}}b_{u}{u \choose q}\mathbb{E}\left[(\nu^{i})^{u-q}\mid\mathbf{Z}\right].
\end{aligned}
}
\end{equation*}
We can obtain $b_{q}$ from the above equation, which gives
\begin{equation*}
{
\begin{aligned}
b_{q}=&-\overset{k-1}{\underset{u=q+1}{\sum}}b_{u}{u \choose q}\mathbb{E}\left[(\nu^{i})^{u-q}\mid\mathbf{Z}\right]\\
&-\bar{b}_{r}{r \choose q}\mathbb{E}\left[(\nu^{i})^{r-q}\mid\mathbf{Z}\right]\\
\Rightarrow b_{q}=&-\overset{k-1-q}{\underset{u=1}{\sum}}b_{q+u}{q+u \choose q}\mathbb{E}\left[(\nu^{i})^{u}\mid\mathbf{Z}\right]\\
&-\bar{b}_{r}{r \choose q}\mathbb{E}\left[(\nu^{i})^{r-q}\mid\mathbf{Z}\right].
\end{aligned}
}
\end{equation*}
The proof is completed.


%

\ifCLASSOPTIONcaptionsoff
  \newpage
\fi

\bibliographystyle{IEEEtran}
\bibliography{Bibliography-MM-MC}

\clearpage
\onecolumn
Now we present the theoretical proofs of some theorems, propositions, and the consistency results given in the paper. Appendix \ref{appendix:A} is in the main body of the paper.


\section{Proof of Proposition \ref{lemma:difference from the statistical standpoint}}
\label{appendix:B}

$\forall \epsilon>0$, we consider $\mathbb{P}\left\{\left|\kappa_{N}^{i;CF}-\kappa_{N}^{i;F}\right|\geq\epsilon\right\}$. Indeed, we have
\begin{equation*}
{
\begin{aligned}
\mathbb{P}\left\{\left|\kappa_{N}^{i;CF}-\kappa_{N}^{i;F}\right|\geq\epsilon\right\}\leq & \frac{\mathbb{E}\left[\left(\kappa_{N}^{i;CF}-\kappa_{N}^{i;F}\right)^{2}\right]}{\epsilon^{2}}\\
=&\frac{\mathbb{E}\left[\left(\underset{m\in \mathscr{I}^{c}}{\sum}(\xi_{m}^{i;CF}-\xi_{m}^{i;F})A_{m}^{i}\right)^{2}\right]}{N^2 \epsilon^{2}}.\\
\end{aligned}
}
\end{equation*}
Denoting $\xi_{m}^{i;CF}-\xi_{m}^{i;F}$ as $\Xi_{m}^{i}$, we have
\begin{equation*}
{
\begin{aligned}
&\mathbb{E}\left[\big(\underset{m\in \mathscr{I}^{c}}{\sum}(\xi_{m}^{i;CF}-\xi_{m}^{i;F})A_{m}^{i}\big)^{2}\right]=\mathbb{E}\left[\big(\underset{m\in \mathscr{I}^{c}}{\sum}\Xi_{m}^{i}A_{m}^{i}\big)^{2}\right]\\
=&\mathbb{E}\left[\underset{m,\bar{m}\in \mathscr{I}^{c}}{\sum}\Xi_{m}^{i}A_{m}^{i}\Xi_{\bar{m}}^{i}A_{\bar{m}}^{i}\right]=\underset{m=\bar{m}\in \mathscr{I}^{c}}{\sum}\mathbb{E}\left[\Xi_{m}^{i}A_{m}^{i}\Xi_{\bar{m}}^{i}A_{\bar{m}}^{i}\right]\\
&\;+\underset{\substack{m,\bar{m}\in \mathscr{I}^{c} \\ m\neq \bar{m}}}{\sum}\mathbb{E}\left[A_{m}^{i}A_{\bar{m}}^{i}\mathbb{E}\left[\Xi_{m}^{i}\Xi_{\bar{m}}^{i}\mid D,\mathbf{Z}\right]\right]\\
=&\underset{m\in \mathscr{I}^{c}}{\sum}\mathbb{E}\left[(A_{m}^{i})^{2}\mathbb{E}\left[(\Xi_{m}^{i})^{2}\mid D,\mathbf{Z}\right]\right]\\
&\;+\underset{\substack{m,\bar{m}\in \mathscr{I}^{c} \\ m\neq \bar{m}}}{\sum}\mathbb{E}\left[A_{m}^{i}A_{\bar{m}}^{i}\mathbb{E}\left[\Xi_{m}^{i}\mid D,\mathbf{Z}\right]\mathbb{E}\left[\Xi_{\bar{m}}^{i}\mid D,\mathbf{Z}\right]\right]\\
=&\underset{m\in \mathscr{I}^{c}}{\sum}\mathbb{E}\left[(A_{m}^{i})^{2}\mathbb{E}\left[(\Xi_{m}^{i})^{2}\mid \mathbf{Z}\right]\right]=2\underset{m\in \mathscr{I}^{c}}{\sum}\mathbb{E}\left[(A_{m}^{i})^{2}\mathbb{E}\left[(\xi^{i;F}_{m})^{2}\mid \mathbf{Z}\right]\right]\\
\leq& 2N\mathbb{E}\left[(A^{i})^{2}\mathbb{E}\left[(\xi^{i;F})^{2}\mid \mathbf{Z}\right]\right].
\end{aligned}
}
\end{equation*}
The last equality follows from
\begin{equation*}
{
\begin{aligned}
&\mathbb{E}\left[(\Xi^{i}_{m})^{2}\mid\mathbf{Z}\right]=\mathbb{E}\left[(\xi^{i;CF}_{m}-\xi^{i;F}_{m})^{2}\mid\mathbf{Z}\right]\\
=&\mathbb{E}\left[(\xi^{i;CF}_{m})^{2}\mid\mathbf{Z}\right]-2\mathbb{E}\left[\xi^{i;F}_{m}\xi^{i;CF}_{m}\mid\mathbf{Z}\right]+\mathbb{E}\left[(\xi^{i;F}_{m})^{2}\mid\mathbf{Z}\right]\\
=&\mathbb{E}\left[(\xi^{i;CF}_{m})^{2}\mid\mathbf{Z}\right]-2\mathbb{E}\left[\xi^{i;F}_{m}\mid\mathbf{Z}\right]\mathbb{E}\left[\xi^{i;CF}_{m}\mid\mathbf{Z}\right]+\mathbb{E}\left[(\xi^{i;F}_{m})^{2}\mid\mathbf{Z}\right]\\
=&\mathbb{E}\left[(\xi^{i;CF}_{m})^{2}\mid\mathbf{Z}\right]+\mathbb{E}\left[(\xi^{i;F}_{m})^{2}\mid\mathbf{Z}\right]=2\mathbb{E}\left[(\xi^{i;F}_{m})^{2}\mid\mathbf{Z}\right].
\end{aligned}
}
\end{equation*}
As a consequence, we have
\begin{equation*}
{
\begin{aligned}
\mathbb{P}\left\{\left|\kappa_{N}^{i;CF}-\kappa_{N}^{i;F}\right|\geq\epsilon\right\}&\leq \frac{2N\mathbb{E}\left[(A^{i})^{2}\mathbb{E}\left[(\xi^{i;F})^{2}\mid\mathbf{Z}\right]\right]}{N^{2}\epsilon^{2}}\\
&=\frac{2\mathbb{E}\left[(A^{i})^{2}\mathbb{E}\left[(\xi^{i;F})^{2}\mid\mathbf{Z}\right]\right]}{N\epsilon^{2}}\rightarrow 0
\end{aligned}
}
\end{equation*}
when $N\rightarrow \infty$. As a result, we have {$\kappa_{N}^{i;CF}-\kappa_{N}^{i;F}\overset{p}{\rightarrow} 0$}. The proof is completed.

\section{Proof of Proposition \ref{lemma:unbiasedness and consistency}}
\label{appendix:C}

First, we write
\begin{equation*}
{
\begin{aligned}
\kappa_{R,N}^{i;F}&=\frac{1}{R}\overset{R}{\underset{u=1}{\sum}}\bigg[\frac{1}{N}\underset{m\in \mathscr{I}^{c}}{\overset{}{\sum}}\xi_{m,u}^{i;F} A_{m}^{i}\bigg]
=\frac{1}{N}\underset{m\in \mathscr{I}^{c}}{\overset{}{\sum}}\left(\frac{1}{R}\overset{R}{\underset{u=1}{\sum}}\xi_{m,u}^{i;F}\right) A_{m}^{i}=\frac{1}{N}\underset{m\in \mathscr{I}^{c}}{\overset{}{\sum}}\mathscr{E}_{m}^{i} A_{m}^{i}.
\end{aligned}
}
\end{equation*}
$\forall\epsilon>0$, we have
\begin{equation*}
{
\begin{aligned}
\mathbb{P}\left\{\left|\kappa_{N}^{i;F}-\kappa_{R,N}^{i;F}\right|\geq\epsilon\right\}\leq  \frac{\mathbb{E}\left[\left(\frac{1}{N}\underset{m\in \mathscr{I}^{c}}{\overset{}{\sum}}\left[\mathscr{E}_{m}^{i}-\xi_{m}^{i;F}\right]A_{m}^{i}\right)^{2}\right]}{\epsilon^{2}}.
\end{aligned}
}
\end{equation*}
We simplify the numerator term. Note that
\begin{equation*}
\resizebox{0.8\textwidth}{!}{$
\begin{aligned}
&\mathbb{E}\left[\left(\frac{1}{N}\underset{m\in \mathscr{I}^{c}}{\overset{}{\sum}}\left[\mathscr{E}_{m}^{i}-\xi_{m}^{i;F}\right]A_{m}^{i}\right)^{2}\right]
=\frac{1}{N^{2}}\:\:\sum_{\mathclap{\substack{m,\bar{m}\in \mathscr{I}^{c}}}}\mathbb{E}\left[\left(\mathscr{E}_{m}^{i}-\xi_{m}^{i;F}\right)\left(\mathscr{E}_{\bar{m}}^{i}-\xi_{\bar{m}}^{i;F}\right)A_{\bar{m}}^{i}A_{m}^{i}\right]\\
=&\frac{1}{N^{2}}\:\:\sum_{\mathclap{\substack{m\in \mathscr{I}^{c}}}}\mathbb{E}\left[\left(\mathscr{E}_{m}^{i}-\xi_{m}^{i;F}\right)^{2}(A_{m}^{i})^{2}\right]
+\frac{1}{N^{2}}\:\:\sum_{\mathclap{\substack{m,\bar{m}\in \mathscr{I}^{c}\\ m\neq\bar{m}}}}\mathbb{E}\left[\left(\mathscr{E}_{m}^{i}-\xi_{m}^{i;F}\right)\left(\mathscr{E}_{\bar{m}}^{i}-\xi_{\bar{m}}^{i;F}\right)A_{\bar{m}}^{i}A_{m}^{i}\right]\\
=&\frac{1}{N^{2}}\:\:\sum_{\mathclap{\substack{m\in \mathscr{I}^{c}}}}\mathbb{E}\left[(A_{m}^{i})^{2} \mathbb{E}\left[ \left(\mathscr{E}_{m}^{i}-\xi_{m}^{i;F}\right)^{2} \mid D,\mathbf{Z}\right] \right]
+\frac{1}{N^{2}}\:\:\sum_{\mathclap{\substack{m,\bar{m}\in \mathscr{I}^{c}\\ m\neq\bar{m}}}}\mathbb{E}\left[A_{\bar{m}}^{i}A_{m}^{i}\mathbb{E}\left[\left(\mathscr{E}_{m}^{i}-\xi_{m}^{i;F}\right)\left(\mathscr{E}_{\bar{m}}^{i}-\xi_{\bar{m}}^{i;F}\right)\mid D,\mathbf{Z}\right]\right]\\
=&\frac{1}{N^{2}}\:\:\sum_{\mathclap{\substack{m\in \mathscr{I}^{c}}}}\mathbb{E}\left[(A_{m}^{i})^{2} \mathbb{E}\left[ \left(\mathscr{E}_{m}^{i}-\xi_{m}^{i;F}\right)^{2} \mid\mathbf{Z}\right] \right]
+\frac{1}{N^{2}}\:\:\sum_{\mathclap{\substack{m,\bar{m}\in \mathscr{I}^{c}\\ m\neq\bar{m}}}}\mathbb{E}\left[A_{\bar{m}}^{i}A_{m}^{i}\mathbb{E}\left[\left(\mathscr{E}_{m}^{i}-\xi_{m}^{i;F}\right)\mid\mathbf{Z}\right]\mathbb{E}\left[\left(\mathscr{E}_{\bar{m}}^{i}-\xi_{\bar{m}}^{i;F}\right)\mid\mathbf{Z}\right]\right]\\
=&\frac{1}{N^{2}}\:\:\sum_{\mathclap{\substack{m\in \mathscr{I}^{c}}}}\mathbb{E}\left[(A_{m}^{i})^{2} \mathbb{E}\left[ \left(\mathscr{E}_{m}^{i}-\xi_{m}^{i;F}\right)^{2} \mid\mathbf{Z}\right] \right].
\end{aligned}$
}
\end{equation*}
The last equality in the above derivation follows from the fact that, conditioning on {$\mathbf{Z}$}, {$\xi_{m,u}^{i;F}$} are i.i.d. of {$\xi_{m}^{i;F}$} for any {$u\in\{1,2,\cdots,R\}$}. Indeed, we have $\mathbb{E}\left[\xi_{m,u}^{i;F}\mid\mathbf{Z}\right]=\mathbb{E}\left[\xi_{m}^{i;F}\mid\mathbf{Z}\right]$ for any $m$ and $u\in\{1,2,\cdots,R\}$. Consequently, we have $\mathbb{E}\left[\left(\mathscr{E}_{m}^{i}-\xi_{m}^{i;F}\right)\mid\mathbf{Z}\right]=0$.

In addition, we simplify the quantity {$\mathbb{E}\left[\left(\mathscr{E}_{m}^{i}-\xi_{m}^{i;F}\right)^{2}\mid\mathbf{Z}\right]$}. Note that {$\mathscr{E}_{m}^{i}-\xi_{m}^{i;F}=\frac{1}{R}\underset{u=1}{\overset{R}{\sum}}[\xi_{m,u}^{i;F}-\xi_{m}^{i;F}]$}. We therefore have
\begin{equation*}
{
\begin{aligned}
&\mathbb{E}\left[\left(\overset{R}{\underset{u=1}{\sum}}\left[\xi_{m,u}^{i;F}-\xi_{m}^{i;F}\right]\right)^{2}\mid\mathbf{Z}\right]
=\overset{R}{\underset{u,\bar{u}=1}{\sum}}\mathbb{E}\left[\left(\xi_{m,u}^{i;F}-\xi_{m}^{i;F}\right)\left(\xi_{m,\bar{u}}^{i;F}-\xi_{m}^{i;F}\right)\mid\mathbf{Z}\right]\\
=&\overset{R}{\underset{u,\bar{u}=1}{\sum}}\left\{\mathbb{E}\left[\xi_{m,u}^{i;F}\xi_{m,\bar{u}}^{i;F}\mid\mathbf{Z}\right]-\mathbb{E}\left[\xi_{m,u}^{i;F}\xi_{m}^{i;F}\mid\mathbf{Z}\right]\right.
\left.-\mathbb{E}\left[\xi_{m}^{i;F}\xi_{m,\bar{u}}^{i;F}\mid\mathbf{Z}\right]+\mathbb{E}\left[\xi_{m}^{i;F}\xi_{m}^{i;F}\mid\mathbf{Z}\right]\right\}\\
=&\overset{R}{\underset{u=1}{\sum}}\mathbb{E}\left[(\xi_{m,u}^{i;F})^{2}\mid\mathbf{Z}\right]-2R\overset{R}{\underset{u=1}{\sum}}\mathbb{E}\left[\xi_{m,u}^{i;F}\xi_{m}^{i;F}\mid\mathbf{Z}\right]+R^{2}\mathbb{E}\left[(\xi_{m}^{i;F})^{2}\mid\mathbf{Z}\right]
+\overset{R}{\underset{\substack{u,\bar{u}=1 \\ u\neq\bar{u}}}{\sum}}\mathbb{E}\left[\xi_{m,u}^{i;F}\xi_{m,\bar{u}}^{i;F}\mid\mathbf{Z}\right]\\
=&\left[R^{2}+R\right]\mathbb{E}\left[\left(\xi_{m}^{i;F}\right)^{2}\mid\mathbf{Z}\right].
\end{aligned}
}
\end{equation*}
We justify the last equality. The last equality follows from the fact that, conditioning on {$\mathbf{Z}$}, {$\xi_{m,u}^{i;F}$} are i.i.d. of {$\xi_{m}^{i;F}$} and {$\xi_{m,u}^{i;F}$} are i.i.d. of {$\xi_{m,\bar{u}}^{i;F}$} for any {$u,\;\bar{u}\in\{1,2,\cdots,R\}$}. Indeed, under the given fact, we have
\begin{equation*}
{
\begin{aligned}
&\mathbb{E}\left[\xi_{m,u}^{i;F}\xi_{m}^{i;F}\mid \mathbf{Z}\right]=\mathbb{E}\left[\xi_{m,u}^{i;F}\mid \mathbf{Z}\right]\mathbb{E}\left[\xi_{m}^{i;F}\mid \mathbf{Z}\right]\\
=&\mathbb{E}\left[\mathbb{E}\left[\xi_{m,u}^{i;F}\mid D, \mathbf{Z}\right]\mid\mathbf{Z}\right]\mathbb{E}\left[\mathbb{E}\left[\xi_{m}^{i;F}\mid D,\mathbf{Z}\right]\mid \mathbf{Z}\right]=0
\end{aligned}
}
\end{equation*}
and
\begin{equation*}
{
\begin{aligned}
&\mathbb{E}\left[\xi_{m,u}^{i;F}\xi_{m,\bar{u}}^{i;F}\mid \mathbf{Z}\right]=\mathbb{E}\left[\xi_{m,u}^{i;F}\mid \mathbf{Z}\right]\mathbb{E}\left[\xi_{m,\bar{u}}^{i;F}\mid \mathbf{Z}\right]\qquad (u\ne \bar{u})\\
=&\mathbb{E}\left[\mathbb{E}\left[\xi_{m,u}^{i;F}\mid D,\mathbf{Z}\right]\mid \mathbf{Z}\right]\mathbb{E}\left[\mathbb{E}\left[\xi_{m,\bar{u}}^{i;F}\mid D,\mathbf{Z}\right]\mid \mathbf{Z}\right]=0.
\end{aligned}
}
\end{equation*}
Consequently, we have
\begin{equation*}
{
\begin{aligned}
\mathbb{E}\left[\left(\mathscr{E}_{m}^{i}-\xi_{m}^{i;F}\right)^{2}\mid\mathbf{Z}\right]=\left(1+\frac{1}{R}\right)\mathbb{E}\left[\left(\xi_{m}^{i;F}\right)^{2}\mid\mathbf{Z}\right].
\end{aligned}
}
\end{equation*}
Thus, we have
\begin{equation}
{
\begin{aligned}\label{eqt:summation R bound}
\mathbb{P}\left\{\left|\kappa_{N}^{i;F}-\kappa_{R,N}^{i;F}\right|\geq\epsilon\right\}&\leq \frac{\frac{1}{N^{2}}\underset{m\in \mathscr{I}^{c}}{\overset{}{\sum}}\mathbb{E}\left[(A_{m}^{i})^{2}
\left(1+\frac{1}{R}\right)\mathbb{E}\left[\left(\xi_{m}^{i;F}\right)^{2}\mid\mathbf{Z}\right]\right]}{\epsilon^{2}}\\
&\leq \frac{\left(1+\frac{1}{R}\right)\mathbb{E}\left[(A^{i})^{2}\mathbb{E}\left[(\xi^{i;F})^{2}\mid\mathbf{Z}\right]\right]}{N\epsilon^{2}}.
\end{aligned}
}
\end{equation}
We notice that no matter we set $R\rightarrow \infty$ followed by $N\rightarrow \infty$, or we fix $R$ but let $N\rightarrow \infty$, we see that {$\mathbb{P}\left\{\left|\kappa_{N}^{i;F}-\kappa_{R,N}^{i;F}\right|\geq\epsilon\right\} \rightarrow 0$}. The proof is completed.

\section{Proof of The Consistency of $\hat{\theta}_{N}^{i}$}
\label{appendix:D}

$\forall\epsilon>0$, we have
\begin{equation*}
{
\begin{aligned}
&\mathbb{P}_{\hat{\rho}}\left\{\left|\hat{\theta}_{N}^{i}-\theta^{i}\right| \geq \epsilon\right\}
=\mathbb{P}_{\hat{\rho}}\left\{\left|\hat{\theta}_{N}^{i}-\hat{\bar{\theta}}_{N}^{i}+\hat{\bar{\theta}}_{N}^{i}-\theta^{i}\right| \geq \epsilon\right\}\\
\leq&\mathbb{P}_{\hat{\rho}}\left\{\left|\hat{\theta}_{N}^{i}-\hat{\bar{\theta}}_{N}^{i}\right| \geq \frac{\epsilon}{2}\right\}+\mathbb{P}_{\hat{\rho}}\left\{\left|\hat{\bar{\theta}}_{N}^{i}-\theta^{i}\right| \geq \frac{\epsilon}{2}\right\}.
\end{aligned}
}
\end{equation*}
Since {$\left(\xi^{i;F}\overset{d}{=}\xi^{i;CF}\right)\mid\mathbf{Z}$} and {$\left(\hat{\xi}^{i;F}\overset{d}{=}\hat{\xi}^{i;CF}\right)\mid\mathbf{Z}$}, we have {$\hat{\bar{\theta}}_{N}^{i}\overset{d}{=}\hat{\tilde{\theta}}_{N}^{i}$} by Lemma \ref{lemma:simple lemma2}. Moreover, we know that {$\mathbb{P}_{\hat{\rho}}\left\{\left|\hat{\tilde{\theta}}_{N}^{i}-\theta^{i}\right| \geq \frac{\epsilon}{2}\right\}\overset{p}{\rightarrow}0$} under the assumptions given in \cite{mackey2018orthogonal}. Together with the fact that {$\hat{\bar{\theta}}_{N}^{i}\overset{d}{=}\hat{\tilde{\theta}}_{N}^{i}$}, we have {$\mathbb{P}_{\hat{\rho}}\left\{\left|\hat{\bar{\theta}}_{N}^{i}-\theta^{i}\right| \geq \frac{\epsilon}{2}\right\}\overset{p}{\rightarrow}0$} by Lemma \ref{lemma:simple lemma}. We turn to consider the quantity {$\mathbb{P}_{\hat{\rho}}\left\{\left|\hat{\theta}_{N}^{i}-\hat{\bar{\theta}}_{N}^{i}\right| \geq \frac{\epsilon}{2}\right\}$}, and we aim to show that {$\mathbb{P}_{\hat{\rho}}\left\{\left|\hat{\theta}_{N}^{i}-\hat{\bar{\theta}}_{N}^{i}\right| \geq \frac{\epsilon}{2}\right\}\overset{p}{\rightarrow} 0$}. Notice that
\begin{equation}
{
\begin{aligned}\label{eqt:consistent proof estimator decomp}
&\mathbb{P}_{\hat{\rho}}\left\{\left|\hat{\theta}_{N}^{i}-\hat{\bar{\theta}}_{N}^{i}\right| \geq \frac{\epsilon}{2}\right\}=\mathbb{P}_{\hat{\rho}}\left\{\left|\hat{\kappa}_{R,N}^{i;F}-\hat{\kappa}_{N}^{i;F}\right| \geq \frac{\epsilon}{2}\right\}\\
\leq&\underbrace{\mathbb{P}_{\hat{\rho}}\left\{\left|\hat{\kappa}_{R,N}^{i;F}-\kappa_{R,N}^{i;F}\right| \geq \frac{\epsilon}{8}\right\}}_{(a)}+\underbrace{\mathbb{P}_{\hat{\rho}}\left\{\left|\kappa_{R,N}^{i;F}-\kappa_{N}^{i;F}\right| \geq \frac{\epsilon}{8}\right\}}_{(b)}\\
&\;+\underbrace{\mathbb{P}_{\hat{\rho}}\left\{\left|\kappa_{N}^{i;F}-\kappa_{N}^{i;CF}\right| \geq \frac{\epsilon}{8}\right\}}_{(c)}+\underbrace{\mathbb{P}_{\hat{\rho}}\left\{\left|\kappa_{N}^{i;CF}-\hat{\kappa}_{N}^{i;F}\right| \geq \frac{\epsilon}{8}\right\}}_{(d)}.
\end{aligned}
}
\end{equation}
Note that {$\left|\kappa_{R,N}^{i;F}-\kappa_{N}^{i;F}\right|$} and {$\left|\kappa_{N}^{i;F}-\kappa_{N}^{i;CF}\right|$} do not incorporate any terms related to the estimated {$\hat{\rho}$}. From Proposition \ref{lemma:difference from the statistical standpoint} and Proposition \ref{lemma:unbiasedness and consistency}, we conclude that (\ref{eqt:consistent proof estimator decomp}b) and (\ref{eqt:consistent proof estimator decomp}c) converge to {$0$} in probability respectively. 
It remains to show the convergence of (\ref{eqt:consistent proof estimator decomp}a) and (\ref{eqt:consistent proof estimator decomp}d).
Consider (\ref{eqt:consistent proof estimator decomp}d) first. Since
\begin{equation*}
{
\begin{aligned}
&\left|\kappa_{N}^{i;CF}-\hat{\kappa}_{N}^{i;F}\right|=\left|\frac{1}{N}\underset{m\in \mathscr{I}^{c}}{\sum}\xi_{m}^{i;CF}A_{m}^{i}-\frac{1}{N}\underset{m\in \mathscr{I}^{c}}{\sum}\hat{\xi}_{m}^{i;F}\hat{A}_{m}^{i}\right|\\
\leq&\underbrace{\left|\frac{1}{N}\underset{m\in \mathscr{I}^{c}}{\sum}(\xi_{m}^{i;CF}A_{m}^{i}-\xi_{m}^{i;F}A_{m}^{i})\right|}_{\Gamma_{1}}\\
&\;+\underbrace{\left|\frac{1}{N}\underset{m\in \mathscr{I}^{c}}{\sum}(\xi_{m}^{i;F}A_{m}^{i}-\hat{\xi}_{m}^{i;F}A_{m}^{i})\right|}_{\Gamma_{2}}+\underbrace{\left|\frac{1}{N}\underset{m\in \mathscr{I}^{c}}{\sum}\hat{\xi}_{m}^{i;F}(A_{m}^{i}-\hat{A}_{m}^{i})\right|}_{\Gamma_{3}},
\end{aligned}
}
\end{equation*}
(\ref{eqt:consistent proof estimator decomp}d) is bounded above by
\begin{equation}
{
\begin{aligned}
&\underbrace{\mathbb{P}_{\hat{\rho}}\left\{\Gamma_{1} \geq \frac{\epsilon}{24}\right\}}_{(a)}+\underbrace{\mathbb{P}_{\hat{\rho}}\left\{\Gamma_{2} \geq \frac{\epsilon}{24}\right\}}_{(b)}+\underbrace{\mathbb{P}_{\hat{\rho}}\left\{\Gamma_{3} \geq \frac{\epsilon}{24}\right\}}_{(c)}.\label{eqt:consistency check equation}
\end{aligned}
}\\
\end{equation}
(\ref{eqt:consistency check equation}a) converges to $0$ in probability due to Proposition \ref{lemma:difference from the statistical standpoint}. We study the quantities (\ref{eqt:consistency check equation}b) and (\ref{eqt:consistency check equation}c). 
	
(\ref{eqt:consistency check equation}b) can be further bounded. If {$N^{c}$} is the size of {$\mathscr{I}^{c}$}, then we have
\begin{equation*}
{
\begin{aligned}
\Gamma_{2}=&\left|\frac{1}{N}\underset{m\in \mathscr{I}^{c}}{\sum}(\xi_{m}^{i;F}A_{m}^{i}-\hat{\xi}_{m}^{i;F}A_{m}^{i})\right|
\leq\underbrace{\left|\frac{1}{N}\underset{m\in \mathscr{I}^{c}}{\sum}\xi_{m}^{i;F}A_{m}^{i}-\frac{N^{c}}{N}\mathbb{E}_{\hat{\rho}}\left[\xi^{i;F}A^{i}\right]\right|}_{\Gamma_{2;1}}\\
&\;+\underbrace{\left|\frac{N^{c}}{N}\mathbb{E}_{\hat{\rho}}\left[\xi^{i;F}A^{i}\right]-\frac{N^{c}}{N}\mathbb{E}_{\hat{\rho}}\left[\hat{\xi}^{i;F}A^{i}\right]\right|}_{\Gamma_{2;2}}
+\underbrace{\left|\frac{N^{c}}{N}\mathbb{E}_{\hat{\rho}}\left[\hat{\xi}^{i;F}A^{i}\right]-\frac{1}{N}\underset{m\in \mathscr{I}^{c}}{\sum}\hat{\xi}_{m}^{i;F}A_{m}^{i}\right|}_{\Gamma_{2;3}}.
\end{aligned}
}
\end{equation*}
We see that (\ref{eqt:consistency check equation}b) can be further bounded by
\begin{equation}
{
\begin{aligned}
&\underbrace{\mathbb{P}_{\hat{\rho}}\left\{\Gamma_{2;1} \geq \frac{\epsilon}{72}\right\}}_{(a)}+\underbrace{\mathbb{P}_{\hat{\rho}}\left\{\Gamma_{2;2} \geq \frac{\epsilon}{72}\right\}}_{(b)}+\underbrace{\mathbb{P}_{\hat{\rho}}\left\{\Gamma_{2;3} \geq \frac{\epsilon}{72}\right\}}_{(c)}.\label{eqt:term 1 check}
\end{aligned}
}
\end{equation}
We investigate if (\ref{eqt:term 1 check}a), (\ref{eqt:term 1 check}b), and (\ref{eqt:term 1 check}c) converge to $0$ in probability. We consider (\ref{eqt:term 1 check}a) first. Recall the assumptions that {$(\xi^{i;F} \perp \!\!\! \perp D) \mid \mathbf{Z}$}, {$\left(\xi^{i;F}\overset{d}{=}\xi^{i;CF}\right) \mid \mathbf{Z}$}, and {$(\xi^{i;CF} \perp \!\!\! \perp D) \mid \mathbf{Z}$}, we have {$\frac{1}{N}\underset{m\in \mathscr{I}^{c}}{\sum}\xi_{m}^{i;F}A_{m}^{i}\overset{d}{=}\frac{1}{N}\underset{m\in \mathscr{I}^{c}}{\sum}\xi_{m}^{i;CF}A_{m}^{i}$} by Lemma \ref{lemma:simple lemma2}.	Since {$\Gamma_{2;1}=\frac{N^{c}}{N}\left|\frac{1}{N^{c}}\underset{m\in \mathscr{I}^{c}}{\sum}\xi_{m}^{i;F}A_{m}^{i}-\mathbb{E}_{\hat{\rho}}\left[\xi^{i;F}A^{i}\right]\right|$}, we have
\begin{equation*}
{
\begin{aligned}
&\mathbb{P}_{\hat{\rho}}\left\{\Gamma_{2;1}\geq\frac{\epsilon}{72}\right\}
=\mathbb{P}_{\hat{\rho}}\left\{\left|\frac{1}{N^{c}}\underset{m\in \mathscr{I}^{c}}{\sum}\xi_{m}^{i;F}A_{m}^{i}-\mathbb{E}_{\hat{\rho}}\left[\xi^{i;F}A^{i}\right]\right|\geq\frac{\epsilon}{72}\cdot\frac{N}{N^{c}}\right\}\\
\leq& \mathbb{P}_{\hat{\rho}}\left\{\left|\frac{1}{N^{c}}\underset{m\in \mathscr{I}^{c}}{\sum}\xi_{m}^{i;F}A_{m}^{i}-\mathbb{E}_{\hat{\rho}}\left[\xi^{i;F}A^{i}\right]\right|\geq\frac{\epsilon}{72}\right\}
\leq\frac{\mathbb{E}_{\hat{\rho}}\left[\left|\frac{1}{N^{c}}\underset{m\in \mathscr{I}^{c}}{\sum}\xi_{m}^{i;F}A_{m}^{i}-\mathbb{E}_{\hat{\rho}}\left[\xi^{i;F}A^{i}\right]\right|^{2}\right]}{\left(\frac{\epsilon}{72}\right)^{2}}.
\end{aligned}
}
\end{equation*}
Consider the numerator term, 
note that it equals
\begin{equation*}
\resizebox{0.96\textwidth}{!}{$
\begin{aligned}
&\frac{1}{(N^{c})^{2}}\underset{m\in \mathscr{I}^{c}}{\sum}\mathbb{E}_{\hat{\rho}}\left[\left|\xi_{m}^{i;F}A_{m}^{i}-\mathbb{E}_{\hat{\rho}}\left[\xi^{i;F}A^{i}\right]\right|^{2}\right]
+\frac{1}{\left(N^{c}\right)^{2}}\underset{\substack{m,\bar{m}\in \mathscr{I}^{c}\\m\neq\bar{m}}}{\sum}\mathbb{E}_{\hat{\rho}}\left[(\xi_{m}^{i;F}A_{m}^{i}-\mathbb{E}_{\hat{\rho}}[\xi^{i;F}A^{i}])(\xi_{\bar{m}}^{i;F}A_{\bar{m}}^{i}-\mathbb{E}_{\hat{\rho}}[\xi^{i;F}A^{i}])\right]\\
=&\frac{1}{\left(N^{c}\right)^{2}}\underset{m\in \mathscr{I}^{c}}{\sum}\mathbb{E}_{\hat{\rho}}\left[\left(A_{m}^{i}\right)^{2}\mathbb{E}_{\hat{\rho}}\left[\left(\xi_{m}^{i;F}\right)^{2}\mid D,\mathbf{Z}\right]\right]
+\frac{1}{\left(N^{c}\right)^{2}}\underset{\substack{m,\bar{m}\in \mathscr{I}^{c}\\m\neq\bar{m}}}{\sum}\mathbb{E}_{\hat{\rho}}\left[A_{m}^{i}A_{\bar{m}}^{i}\mathbb{E}_{\hat{\rho}}\left[\xi_{m}^{i;F}\mid D,\mathbf{Z}\right]\mathbb{E}_{\hat{\rho}}\left[\xi_{\bar{m}}^{i;F}\mid D,\mathbf{Z}\right]\right]\\
=&\frac{1}{\left(N^{c}\right)^{2}}\underset{m\in \mathscr{I}^{c}}{\sum}\mathbb{E}_{\hat{\rho}}\left[\left(A_{m}^{i}\right)^{2}\mathbb{E}_{\hat{\rho}}\left[\left(\xi_{m}^{i;F}\right)^{2}\mid\mathbf{Z}\right]\right]
=\frac{1}{N^{c}}\mathbb{E}_{\hat{\rho}}\left[\left(A^{i}\right)^{2}\mathbb{E}_{\hat{\rho}}\left[\left(\xi^{i;F}\right)^{2}\mid\mathbf{Z}\right]\right].
\end{aligned}
$}
\end{equation*}
Since {$A^{i}$} and {$\xi^{i;F}$} do not include the estimated nuisance parameters, {$\mathbb{E}_{\hat{\rho}}\left[\left(A^{i}\right)^{2}\mathbb{E}_{\hat{\rho}}\left[\left(\xi^{i;F}\right)^{2}\mid\mathbf{Z}\right]\right]$} is a constant. Moreover, note that {$N^{c}\rightarrow \infty$} when {$N\rightarrow \infty$}, we have
\begin{equation*}
{
\begin{aligned}
\mathbb{P}_{\hat{\rho}}\left\{\Gamma_{2;1}\geq\frac{\epsilon}{72}\right\}\leq\frac{72^2\;\mathbb{E}_{\hat{\rho}}\left[\left(A^{i}\right)^{2}\mathbb{E}_{\hat{\rho}}\left[\left(\xi^{i;F}\right)^{2}\mid\mathbf{Z}\right]\right]}{\epsilon^{2}N^{c}}\overset{p}{\longrightarrow}0.
\end{aligned}
}
\end{equation*}

Now, we consider (\ref{eqt:term 1 check}b). Indeed, we have
\begin{equation*}
{
\begin{aligned}
&\mathbb{P}_{\hat{\rho}}\left\{\Gamma_{2;2}\geq\frac{\epsilon}{72}\right\}
=\mathbb{P}_{\hat{\rho}}\left\{\left|\mathbb{E}_{\hat{\rho}}\left[\xi^{i;F}A^{i}\right]-\mathbb{E}_{\hat{\rho}}\left[\hat{\xi}^{i;F}A^{i}\right]\right|\geq\frac{\epsilon}{72}\cdot\frac{N}{N^{c}}\right\}\\
\leq&\mathbb{P}_{\hat{\rho}}\left\{\left|\mathbb{E}_{\hat{\rho}}\left[(\xi^{i;F}-\hat{\xi}^{i;F})A^{i}\right]\right|\geq\frac{\epsilon}{72}\right\}
\leq\frac{72^2\left\{\mathbb{E}_{\hat{\rho}}\left[(\xi^{i;F}-\hat{\xi}^{i;F})A^{i}\right]\right\}^{2}}{\epsilon^{2}}\\
\leq&\frac{72^2\;\left\{\mathbb{E}_{\hat{\rho}}\left[(\xi^{i;F}-\hat{\xi}^{i;F})^{4q}\right]\right\}^{\frac{1}{2q}}\left\{\mathbb{E}_{\hat{\rho}}\left[(A^{i})^{\frac{4q}{4q-1}}\right]\right\}^{2-\frac{1}{2q}}}{\epsilon^{2}}\overset{p}{\longrightarrow}0.
\end{aligned}
}
\end{equation*}
Here, the last inequality follows from the H\"{o}lders inequality, and thereby the convergence holds $\forall q \in \{1,2,\dots,k\}$ according to  Assumption 1.5 of \cite{mackey2018orthogonal}. Finally, we consider (\ref{eqt:term 1 check}c). We can rewrite {$\Gamma_{2;3}$} as
\begin{equation*}
{
\begin{aligned}
\Gamma_{2;3}=\frac{N^{c}}{N}\left|\mathbb{E}_{\hat{\rho}}\left[\hat{\xi}^{i;F}A^{i}\right]-\frac{1}{N^{c}}\underset{m\in \mathscr{I}^{c}}{\sum}\hat{\xi}_{m}^{i;F}A_{m}^{i}\right|.
\end{aligned}
}
\end{equation*}
Now, we have
\begin{equation*}
{
\begin{aligned}
&\mathbb{P}_{\hat{\rho}}\left\{\Gamma_{2;3}\geq\frac{\epsilon}{72}\right\}
\leq\mathbb{P}_{\hat{\rho}}\left\{\left|\mathbb{E}_{\hat{\rho}}\left[\hat{\xi}^{i;F}A^{i}\right]-\frac{1}{N^{c}}\underset{m\in \mathscr{I}^{c}}{\sum}\hat{\xi}_{m}^{i;F}A_{m}^{i}\right|\geq\frac{\epsilon}{72}\right\}\\
\leq&\frac{72^2\;\mathbb{E}_{\hat{\rho}}\left[\left\{\underset{m\in \mathscr{I}^{c}}{\sum}\left(\mathbb{E}_{\hat{\rho}}\left[\hat{\xi}^{i;F}A^{i}\right]-\hat{\xi}_{m}^{i;F}A_{m}^{i}\right)\right\}^{2}\right]}{\epsilon^{2}\left(N^{c}\right)^{2}}\\
=&\frac{72^2\;\underset{m\in \mathscr{I}^{c}}{\sum}\;\mathbb{E}_{\hat{\rho}}\left[\left(\mathbb{E}_{\hat{\rho}}\left[\hat{\xi}^{i;F}A^{i}\right]-\hat{\xi}_{m}^{i;F}A_{m}^{i}\right)^{2}\right]}{\epsilon^{2}\left(N^{c}\right)^{2}}\\
&\;+\frac{72^2\;\underset{\substack{m,\bar{m}\in \mathscr{I}^{c}\\m\neq\bar{m}}}{\sum}\mathbb{E}_{\hat{\rho}}[(\hat{\xi}_{m}^{i;F}-\xi_{m}^{i;F})A_{m}^{i}]\mathbb{E}_{\hat{\rho}}[(\hat{\xi}_{\bar{m}}^{i;F}-\xi_{\bar{m}}^{i;F})A_{\bar{m}}^{i}]}{\epsilon^{2}\left(N^{c}\right)^{2}}\\
&\;-2\frac{72^2\;(N^c-1)\underset{m\in \mathscr{I}^{c}}{\sum}\mathbb{E}_{\hat{\rho}}[(\hat{\xi}_{m}^{i;F}-\xi_{m}^{i;F})A_{m}^{i}]\mathbb{E}_{\hat{\rho}}[(\hat{\xi}^{i;F}-\xi^{i;F})A^{i}]}{\epsilon^{2}\left(N^{c}\right)^{2}}\\
&\;+\frac{72^2\;\underset{\substack{m,\bar{m}\in \mathscr{I}^{c}\\m\neq\bar{m}}}{\sum}\left\{\mathbb{E}_{\hat{\rho}}\left[\left(\hat{\xi}^{i;F}-\xi^{i;F}\right)A^{i}\right]\right\}^{2}}{\epsilon^{2}\left(N^{c}\right)^{2}}.
\end{aligned}
}
\end{equation*}
Using Assumption 1.5 of \cite{mackey2018orthogonal}, we can conclude that {$\mathbb{P}_{\hat{\rho}}\left\{\Gamma_{2;3}\geq\frac{\epsilon}{72}\right\}\overset{p}{\rightarrow}0$}. 

Next, we come to bound (\ref{eqt:consistency check equation}c). Since {$N^{c}$} is the size of {$\mathscr{I}^{c}$} and
\begin{equation*}
{
\begin{aligned}
\Gamma_{3}=&\left|\frac{1}{N}\underset{m\in \mathscr{I}^{c}}{\sum}\hat{\xi}_{m}^{i;F}A_{m}^{i}-\frac{1}{N}\underset{m\in \mathscr{I}^{c}}{\sum}\hat{\xi}_{m}^{i;F}\hat{A}_{m}^{i}\right|
\leq\left|\frac{1}{N}\underset{m\in \mathscr{I}^{c}}{\sum}\hat{\xi}_{m}^{i;F}A_{m}^{i}-\frac{N^{c}}{N}\mathbb{E}_{\hat{\rho}}\left[\hat{\xi}^{i;F}A^{i}\right]\right|\\
&+\left|\frac{N^{c}}{N}\mathbb{E}_{\hat{\rho}}\left[\hat{\xi}^{i;F}A^{i}\right]-\frac{N^{c}}{N}\mathbb{E}_{\hat{\rho}}\left[\hat{\xi}^{i;F}\hat{A}^{i}\right]\right|
+\left|\frac{N^{c}}{N}\mathbb{E}_{\hat{\rho}}\left[\hat{\xi}^{i;F}\hat{A}^{i}\right]-\frac{1}{N}\underset{m\in \mathscr{I}^{c}}{\sum}\hat{\xi}_{m}^{i;F}\hat{A}_{m}^{i}\right|\\
=&\frac{N^{c}}{N}\underbrace{\left|\frac{1}{N^{c}}\underset{m\in \mathscr{I}^{c}}{\sum}\hat{\xi}_{m}^{i;F}A_{m}^{i}-\mathbb{E}_{\hat{\rho}}\left[\hat{\xi}^{i;F}A^{i}\right]\right|}_{\Gamma_{3;1}}
+\frac{N^{c}}{N}\underbrace{\left|\mathbb{E}_{\hat{\rho}}\left[\hat{\xi}^{i;F}A^{i}\right]-\mathbb{E}_{\hat{\rho}}\left[\hat{\xi}^{i;F}\hat{A}^{i}\right]\right|}_{\Gamma_{3;2}}\\
&+\frac{N^{c}}{N}\underbrace{\left|\mathbb{E}_{\hat{\rho}}\left[\hat{\xi}^{i;F}\hat{A}^{i}\right]-\frac{1}{N^{c}}\underset{m\in \mathscr{I}^{c}}{\sum}\hat{\xi}_{m}^{i;F}\hat{A}_{m}^{i}\right|}_{\Gamma_{3;3}},
\end{aligned}
}
\end{equation*}
we see that (\ref{eqt:consistency check equation}c) can be further bounded by
\begin{equation}
{
\begin{aligned}
&\underbrace{\mathbb{P}_{\hat{\rho}}\left\{\Gamma_{3;1} \geq \frac{\epsilon}{72}\right\}}_{(a)}+\underbrace{\mathbb{P}_{\hat{\rho}}\left\{\Gamma_{3;2} \geq \frac{\epsilon}{72}\right\}}_{(b)}+\underbrace{\mathbb{P}_{\hat{\rho}}\left\{\Gamma_{3;3} \geq \frac{\epsilon}{72}\right\}}_{(c)}.\label{eqt:term 2 check}
\end{aligned}
}
\end{equation}
Similarly, we can prove that (\ref{eqt:term 2 check}a), (\ref{eqt:term 2 check}b), and (\ref{eqt:term 2 check}c) converge to $0$ in probability when $N\rightarrow \infty$ using the arguments in proving that (\ref{eqt:term 1 check}a), (\ref{eqt:term 1 check}b), and (\ref{eqt:term 1 check}c) converge to $0$. As a result, the quantity (\ref{eqt:consistent proof estimator decomp}d) converges to $0$ in probability when $N\rightarrow \infty$.
	
Lastly, we turn to consider the quantity (\ref{eqt:consistent proof estimator decomp}a). In fact, we have
\begin{subequations}
{
\begin{align}
&\mathbb{P}_{\hat{\rho}}\left\{\left|\hat{\kappa}_{R,N}^{i;F}-\kappa_{R,N}^{i;F}\right|\geq\frac{\epsilon}{8}\right\}\nonumber\\
\leq&\mathbb{P}_{\hat{\rho}}\left\{\left|\frac{1}{N}\underset{m\in \mathscr{I}^{c}}{\overset{}{\sum}}\frac{1}{R}\underset{u=1}{\overset{R}{\sum}}\hat{\xi}_{m,u}^{i;F}(\hat{A}_{m}^{i}-A_{m}^{i})\right|\geq\frac{\epsilon}{16}\right\}\label{eqt:Chebyshev Inequality quantity 2 term 1}\\
&\;+\mathbb{P}_{\hat{\rho}}\left\{\left|\frac{1}{N}\underset{m\in \mathscr{I}^{c}}{\overset{}{\sum}}A_{m}^{i}\frac{1}{R}\underset{u=1}{\overset{R}{\sum}}\left(\hat{\xi}_{m,u}^{i;F}-\xi_{m,u}^{i;F}\right)\right|\geq\frac{\epsilon}{16}\right\}.\label{eqt:Chebyshev Inequality quantity 2 term 2}
\end{align}
}\noindent
\end{subequations}
We can argue that \eqref{eqt:Chebyshev Inequality quantity 2 term 1} converges to $0$ in probability as $N\rightarrow \infty$ using similar arguments when we prove that (\ref{eqt:consistency check equation}c) converges to $0$ in probability. Simultaneously, we can argue \eqref{eqt:Chebyshev Inequality quantity 2 term 2} converges to $0$ in probability as $N\rightarrow \infty$ using similar arguments when we prove that (\ref{eqt:consistency check equation}b) converges to $0$ in probability. Consequently, we have $\hat{\kappa}_{R,N}^{i;F}-\kappa_{R,N}^{i;F}$ converges to $0$ in probability.

The proof is completed.

\end{document}